\documentclass[final,12pt]{colt2020-plain}


\title[Stochastic Bandit Learnability]{A Complete  Characterization of Learnability \\ for Stochastic Noisy Bandits}
\usepackage{times}



\usepackage{amsmath,graphicx}

\usepackage{amsfonts}       
\usepackage{amssymb}
\usepackage{mathrsfs}
\usepackage{algorithm}
\usepackage{algorithmic}
 
\newcommand{\polylog}{\mathrm{polylog}}
\altauthor{%
 \Name{Steve Hanneke} \Email{steve.hanneke@gmail.com}\\
 \addr Purdue University
 \AND
 \Name{Kun Wang} \Email{wangkun8512@gmail.com}\\
 \addr Purdue University%
}

\begin{document}

\maketitle

\begin{abstract}%
We study the stochastic noisy bandit problem with an unknown reward function $f^*$ in a known function class $\mathcal{F}$. Formally, a model $M$ maps arms $\pi$ to a probability distribution $M(\pi)$ of reward. A model class $\mathcal{M}$ is a collection of models. For each model $M$, define its mean reward function $f^M(\pi)=\mathbb{E}_{r \sim M(\pi)}[r]$.  In the bandit learning problem, we proceed in rounds, pulling one arm $\pi$ each round and observing a reward sampled from $M(\pi)$. With knowledge of $\mathcal{M}$, supposing that the true model $M\in \mathcal{M}$, the objective is to identify an arm $\hat{\pi}$ of near-maximal mean reward $f^M(\hat{\pi})$ with high probability in a bounded number of rounds. If this is possible, then the model class is said to be learnable. 

Importantly, a result of \cite{hanneke2023bandit} shows there exist model classes for which learnability is undecidable. However, the model class they consider features deterministic rewards, and they raise the question of whether learnability is decidable for classes containing sufficiently noisy models. More formally, for any function class $\mathcal{F}$ of mean reward functions, we denote by $\mathcal{M}_{\mathcal{F}}$ the set of all models $M$ such that $f^M \in \mathcal{F}$. In other words, $\mathcal{M}_{\mathcal{F}}$ admits arbitrary zero-mean noise. \cite{hanneke2023bandit} ask the question: Can one give a simple complete characterization of which function classes $\mathcal{F}$ satisfy that $\mathcal{M}_{\mathcal{F}}$ is learnable?

For the first time, we answer this question in the positive by giving a complete characterization of learnability for model classes $\mathcal{M}_{\mathcal{F}}$. In addition to that, we also describe the full spectrum of possible optimal query complexities. Further, we prove adaptivity is sometimes necessary to achieve the optimal query complexity. Last, we revisit an important complexity measure for interactive decision making, the Decision-Estimation-Coefficient \citep{foster2021statistical,foster2023tight}, and propose a new variant of the DEC which also characterizes learnability in this setting.

\end{abstract}

\begin{keywords}%
  Bandits, Structured Bandits, Learning Theory, Query Complexity
\end{keywords}

\section{Introduction}
\label{intro}

The multi-armed bandit problem \citep{robbins1952some,auer2002finite,auer2002nonstochastic,lattimore2020bandit} is a problem in which a learner performs an action and gains a reward round by round, with the intention of identifying actions with the highest rewards. The multi-armed bandit problem occurs in many contexts. For instance, imagine one situation where a restaurant customer wants to figure out which item on the menu is the most delicious. By strategically choosing his order each time he visits the restaurant, he can eventually identify which item is the most delicious. Many other examples in practice include recommendation systems, clinical trials, and financial portfolio design \citep{yue2009interactively,combes2015combinatorial,li2016collaborative,li2010contextual,slivkins2011contextual}. The main theoretical interests are trying to identify an optimal strategy and corresponding optimal query complexity.




From a theoretical perspective, one important recent line of work explores the role of \emph{model class} in the multi-armed bandit problem. The framework of bandit learning with model classes is analogous to the idea of concept classes in the PAC learning framework \citep{vapnik1974theory,valiant1984theory}. The PAC learning framework provides a concise and elegant characterization of learnability and query complexity based on the concept class, thereby unifying different learning problems into a single theory. In contrast, most of the existing bandit literature focuses on different special cases, such as linear bandits \citep{mcmahan2004online,awerbuch2008online,abbasi2011improved}, Lipschitz bandits \citep{agrawal1995continuum,kleinberg2004nearly,kleinberg2008multi,slivkins2011multi} and bandits with smooth functions on a metric space \citep{bubeck2011x}. There is no unified theory in the bandit literature. 
In the context of the multi-armed bandit problem, a model maps actions (also known as arms) to reward distributions, and a model class is a set of models. The query complexity is the minimal number of queries enough to identify the arm with near-maximal mean reward. If query complexity is finite, then we say the model class is learnable. In this setting, one natural question would be: Is it possible to give a general theory based on the model class in the multi-armed bandit problem? Several papers have already made some efforts in this direction.  \cite{amin2011bandits} study exact learning in bandit problem, proposing a corresponding Haystack Dimension. \cite{foster2021statistical,foster2023tight} give a lower bound and a non-matching upper bound for the query complexity of the multi-armed bandit problem based on their proposed complexity measure, the Decision-Estimation-Coefficient.
\cite{hanneke2023bandit} identify the optimal query complexity of deterministic rewards for binary-valued bandits. Other works have studied the problem of bandit learning with model classes under the name of structured bandits since a long time ago \citep{combes2017minimal,tirinzoni2020novel,van2024optimal}.
 However, a complete characterization of learnability in general has remained elusive. 


Perhaps surprisingly, recent work shows this is unavoidable: a result of \cite{hanneke2023bandit} reveals that there exist model classes for which learnability is undecidable within ZFC. Nevertheless, one promising direction would be identifying fairly general subfamilies of model classes for which there exist complete characterizations of learnability. In addition, the model class \cite{hanneke2023bandit} consider features deterministic reward. However in the bandit literature, most of existing works admit noisy rewards structure. Therefore, \cite{hanneke2023bandit} ask the following question: does there exist a simple and complete characterization of learnability for bandits with arbitrary (zero-mean) noise? 

Formally, we define the stochastic noisy bandit problem as follows. There is a set of arms $\Pi$. A model $M$ maps arms $\pi$ to reward distributions $M(\pi)$. A model class $\mathcal{M}$ is a collection of models. For each model $M$, its mean reward function $f^M(\pi)=\mathbb{E}_{r \sim M(\pi)}[r] \in [0,1]$. In other words, for a model $M$, each arm $\pi \in \Pi$ has a reward distribution $M(\pi)$ with mean value $f^M(\pi)$. Our main interest in this paper is model classes induced by \emph{function classes}. A function class $\mathcal{F}$ is defined as a collection of functions $f:\Pi \rightarrow [0,1]$. For any function class $\mathcal{F}$, we denote by $\mathcal{M}_{\mathcal{F}}$ the set of all models $M$ such that $f^M \in \mathcal{F}$. In other words, $\mathcal{M}_{\mathcal{F}}$ admits models whose mean function is in $\mathcal{F}$, but allows for arbitrary zero-mean noise. The learning problem induced by model class $\mathcal{M}_{\mathcal{F}}$ is described as follows: it proceeds round by round. In each round $i$, the learner chooses one arm $\pi_i$ to pull and receives a reward $r(\pi_i) \in [0,1]$, which is a random variable sampled from $M(\pi_i)$ (conditionally independent of the past given $\pi_i$). With the knowledge of $\mathcal{F}$, suppose the true model $M \in \mathcal{M}_{\mathcal{F}}$, the objective of the learner is to identify an arm $\hat{\pi}$ such that $f^M(\hat{\pi}) \geq \sup_{\pi} f^M(\pi)-\alpha$ with probability at least $1-\delta$ in a bounded number of rounds, $\forall \alpha,\delta \in (0,1)$ and $\forall M \in \mathcal{M}_{\mathcal{F}}$. If this is possible, we say function class $\mathcal{F}$ is learnable with arbitrary noise. The minimal bound on the number of rounds sufficient for achieving this is called the query complexity, denoted by $QC(\mathcal{F},\alpha,\delta)$.




In this work, for the first time, we give a complete characterization of learnability for stochastic noisy bandits with arbitrary noise: Define \emph{generalized maximin volume} of function class $\mathcal{F}$
\begin{equation}
\label{dimension}
    \begin{aligned}
        \gamma_{\mathcal{F},\alpha}=\sup_{p \in \Delta(\Pi)} \inf_{f \in \mathcal{F}} \mathbb{P}_{\pi \sim p}\left(\sup_{\pi^*}f(\pi^*)-f(\pi)\leq \alpha\right),
    \end{aligned}
\end{equation}
where $\Delta(\Pi)$ is the set of all distributions on $\Pi$. Then we have:
\begin{theorem}
\label{main_learn}
  $\mathcal{F}$ is learnable with arbitrary noise if and only if $\gamma_{\mathcal{F},\alpha}>0\; \forall \alpha \in (0,1)$ .
\end{theorem}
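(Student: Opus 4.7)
The plan is to handle the two implications separately.

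\textbf{Sufficiency.} Assume $\gamma_{\mathcal{F},\alpha}>0$ for every $\alpha\in(0,1)$, and fix a target accuracy $\alpha$ and confidence $\delta$. Set $\gamma:=\gamma_{\mathcal{F},\alpha/2}>0$ and pick $p^\star\in\Delta(\Pi)$ with $\inf_{f\in\mathcal{F}}\mathbb{P}_{\pi\sim p^\star}(\sup_{\pi^*}f(\pi^*)-f(\pi)\le\alpha/2)\ge\gamma/2$. The explore-then-exploit algorithm samples $n=\lceil 2\gamma^{-1}\log(2/\delta)\rceil$ arms i.i.d.\ from $p^\star$, pulls each for $m=\lceil 8\alpha^{-2}\log(4n/\delta)\rceil$ rounds to form empirical means $\hat f(\pi_i)$, and returns $\arg\max_i \hat f(\pi_i)$. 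Two standard events suffice: with probability $\ge 1-(1-\gamma/2)^n\ge 1-\delta/2$ at least one sampled arm is $\alpha/2$-optimal, and by Hoeffding plus a union bound, with probability $\ge 1-\delta/2$ every empirical mean is within $\alpha/4$ of its true value. Combining yields an $\alpha$-optimal output with probability $\ge 1-\delta$, so $QC(\mathcal{F},\alpha,\delta)\le nm<\infty$.

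\textbf{Necessity.} Suppose $\gamma_{\mathcal{F},\alpha_0}=0$ for some $\alpha_0\in(0,1)$, and fix $\delta_0=1/4$. Since $M(\pi)=\delta_{f(\pi)}$ lies in $\mathcal{M}_{\mathcal{F}}$, any algorithm learning $\mathcal{M}_{\mathcal{F}}$ must also learn the noiseless subproblem where each reward equals $f(\pi)$ exactly; it therefore suffices to rule out noiseless learnability. The combinatorial consequence of $\gamma=0$ that I plan to exploit is: for every finite set $Q\subseteq\Pi$ there exists $f\in\mathcal{F}$ such that no arm in $Q$ is $\alpha_0$-optimal for $f$ (apply the definition of $\gamma$ with $p=\operatorname{Unif}(Q)$ to force $\inf_f \mathbb{P}_{\pi\sim p}(\cdot)<1/|Q|$). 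For a randomized noiseless algorithm $A$ with budget $T$, fix a realization of its internal randomness at which $A$ still succeeds with probability $\ge 1-\delta_0-o(1)$, and descend $A$'s decision tree along an adversarially chosen observation path; at each partial execution, applying the finite-$Q$ lemma to the set of outputs $A$ could still produce along completions of the current path yields a new $f\in\mathcal{F}$ consistent with the observations so far on which $A$'s eventual output is forced to be $\alpha_0$-suboptimal.

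\textbf{Main obstacle.} The delicate point is the continuum issue: a noiseless adaptive algorithm's decision tree branches on real-valued rewards in $[0,1]$, so the set of outputs across $\mathcal{F}$ may be uncountable and the finite-$Q$ consequence does not apply verbatim to $A$'s entire output range. The plan to overcome this is a diagonal construction interleaved with $A$'s execution: at each depth $k\le T$, apply the finite-$Q$ lemma to the finite set of candidate outputs along the current partial path to produce $f_k\in\mathcal{F}$, then pass to a diagonal hypothesis $f^\star\in\mathcal{F}$ that matches the realized observations at all depths while avoiding every arm $A$ could output along the realized path. Making this diagonalization rigorous and handling measurability across $A$'s internal randomness is where the bulk of the technical effort lies; by contrast, the reduction from arbitrary noise to noiseless is a one-line observation.
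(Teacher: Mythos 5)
Your sufficiency argument is correct and is essentially the paper's Algorithm 1 (explore-then-exploit: sample arms from a near-maximin distribution, estimate means by Hoeffding, return the empirical best); no issues there.

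The necessity direction, however, has a fatal gap, and it is not the measurability/continuum issue you flag as the "main obstacle" --- it is the reduction itself. You propose to show that $\gamma_{\mathcal{F},\alpha_0}=0$ implies the \emph{noiseless} subproblem (rewards $\delta_{f(\pi)}$) is unlearnable. That implication is false. Consider $\Pi=\{0\}\cup\mathbb{N}$ and $\mathcal{F}=\{f_n:n\in\mathbb{N}\}$ with $f_n(0)=\frac{1}{n+2}$, $f_n(n)=1$, and $f_n(\pi)=0$ otherwise. For $\alpha_0=\frac{1}{2}$ the only $\alpha_0$-optimal arm of $f_n$ is $n$, so any $p\in\Delta(\Pi)$ has $\inf_n p(\{n\})=0$ and hence $\gamma_{\mathcal{F},\alpha_0}=0$; yet in the noiseless setting a single query to arm $0$ reveals $n$ exactly and the class is learnable with one query. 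Your diagonalization cannot produce the required $f^\star$ here: after the first observation the version space is a singleton. This is precisely the phenomenon behind the undecidability result of \cite{hanneke2023bandit} for deterministic rewards --- noiseless learnability is \emph{not} characterized by $\gamma_{\mathcal{F},\alpha}>0$, so no adversary argument confined to deterministic models can establish the necessity direction. Your finite-$Q$ lemma (for every finite $Q$ some $f\in\mathcal{F}$ has no $\alpha_0$-optimal arm in $Q$) is correct, but it only controls outputs, not the information leaked by exact real-valued observations.

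The paper's proof goes the opposite way: it \emph{uses} the noise rather than stripping it away. Given a learner $A$ with query complexity $T$, replace each model $M^*$ by $M^*_B$ with the same mean function but Bernoulli rewards, and run $A$ while answering every query with an independent Bernoulli$(\frac{1}{2})$ bit. With probability $2^{-T}$ these bits coincide with the bits $M^*_B$ would have produced, so the output distribution $p$ of this single fixed simulation places mass at least $(1-\delta)2^{-T}>0$ on $\alpha$-optimal arms simultaneously for every $M^*\in\mathcal{M}_{\mathcal{F}}$, witnessing $\gamma_{\mathcal{F},\alpha}\geq(1-\delta)2^{-T}>0$ (and the $\Omega(\log\frac{1}{\gamma_{\mathcal{F},\alpha}})$ lower bound). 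To repair your proof you would need to replace your noiseless reduction with an argument of this type.
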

This establishes the first complete characterization of learnability for stochastic noisy bandits, which resolves the open question proposed by \cite{hanneke2023bandit}, and completes a long line of research on bandit learnability which has been studied many years\footnote{The work studying the general bandit learnability problem can be traced back to at least \cite{amin2011bandits}, though work on stochastic bandits even dates back to \cite{robbins1952some}.}. In addition, we also have several other results:


\begin{itemize}
    \item We further explore the optimal query complexity of stochastic noisy bandits and discover it can range from $\tilde{\Theta}\left(\log \frac{1}{\gamma_{\mathcal{F},\alpha}}\right)$ to $\tilde{\Theta}\left(\frac{1}{\gamma_{\mathcal{F},\alpha}}\right)$.  \footnote{We use $\tilde{\Theta}(f)$ or $\tilde{O}(f)$ to hide additional polylog factor of function $f$.}
    \item We also find adaptivity is sometimes necessary for achieving the optimal query complexity.
    \item We extend arbitrary bounded noise to arbitrary unbounded noise using the median of means method.
    \item We find the $\Omega\left(\log \frac{1}{\gamma_{\mathcal{F},\alpha}}\right)$ lower bound still remains valid when the model class $\mathcal{M}$ contains only Gaussian noise.
    \item We propose a variant of the well-known Decision-Estimation-Coefficient \citep{foster2021statistical,foster2023tight} and prove it can also characterize learnability of stochastic bandits with arbitrary noise.
\end{itemize}

We organize our paper as follows: In Section \ref{learnability}, we provide our main learnability characterization: we present the upper and lower bounds on the query complexity based on \emph{generalized maximin volume} and extend learnability result to no-regret setting. Section \ref{optimal_query_complexity} describes two results on the optimal query complexity. Section \ref{extension} extends our learnability result to more general noise settings. Section \ref{dec} presents our variant of the Decision-Estimation-Coefficient with corresponding query complexity analysis.

\section{Characterization of Learnability}
\label{learnability}

In this section, we introduce our main learnability results. Theorem \ref{upper_bound_learnability} and Theorem \ref{lower_bound_learnability} give sufficient and necessary conditions of learnability and corresponding query complexities, respectively. Combining Theorem \ref{upper_bound_learnability} and Theorem \ref{lower_bound_learnability} gives our characterization of learnability (Theorem \ref{main_learn}). Finally, we extend our learnability results to no-regret learnability (Theorem \ref{no_regret_learnability}) through a equivalence between two settings.


\begin{algorithm}[htbp]
	\caption{Learning algorithm for function class $\mathcal{F}$}
    \label{upper_bound_algo}
	\KwIn{Parameter $\alpha$, $\delta$}
    Recall Equation (\ref{dimension}), $\gamma_{\mathcal{F},\alpha/2}= \sup_p \inf_{f \in \mathcal{F}} \mathbb{P}_{\pi \sim p}\left(\sup_{\pi^*}f(\pi^*)-f(\pi)\leq \alpha/2\right)$.
 
    Let $p$ be the distribution that achieve $\gamma_{\mathcal{F},\alpha/2}$.

    Sample $m=\frac{1}{\gamma_{\mathcal{F},\alpha/2}}\log\frac{2}{\delta}$ arms from $p: \pi_1,\pi_2,...,\pi_m$.
    
    For each $\pi_i$, query $\frac{8}{\alpha^2}\log\frac{4m}{\delta}$ times and let $\hat{f}(\pi_i)$ denote the empirical mean of these rewards.
    
    \KwOut{The arm $\hat{\pi}$ with the largest empirical mean $\hat{f}(\hat{\pi})$.}
\end{algorithm}
\begin{theorem}[Upper bound]
\label{upper_bound_learnability}
    $\gamma_{\mathcal{F},\alpha}>0\; \forall \alpha \in (0,1)$ is a sufficient condition for learnability of $\mathcal{F}$ with arbitrary noise. In addition, $QC(\mathcal{F},\alpha,\delta)=O\left(\frac{8}{\gamma_{\mathcal{F},\alpha/2} \alpha^2}\log\frac{2}{\delta}\log\left(\frac{4}{\delta \gamma_{\mathcal{F},\alpha/2}}\log\frac{2}{\delta}\right)\right)$. 
\end{theorem}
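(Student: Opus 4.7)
The plan is to analyze the non-adaptive Algorithm~\ref{upper_bound_algo}, which draws $m = \frac{1}{\gamma_{\mathcal{F},\alpha/2}} \log \frac{2}{\delta}$ i.i.d.\ arms from the distribution $p$ witnessing $\gamma_{\mathcal{F},\alpha/2}$, queries each arm $n = \frac{8}{\alpha^2} \log \frac{4m}{\delta}$ times, and returns the empirical champion. The correctness argument splits into two high-probability events that together imply $f^M(\hat{\pi}) \geq \sup_\pi f^M(\pi) - \alpha$, and then the query count is just $mn$, which matches the claimed bound by inspection.

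The first step is a covering argument using the definition of generalized maximin volume. Let $f = f^M \in \mathcal{F}$ denote the mean reward of the true model. By definition of $\gamma_{\mathcal{F},\alpha/2}$, the distribution $p$ satisfies $\mathbb{P}_{\pi \sim p}(\sup_{\pi^*} f(\pi^*) - f(\pi) \leq \alpha/2) \geq \gamma_{\mathcal{F},\alpha/2}$. Hence the probability that \emph{none} of the $m$ independent draws $\pi_1, \ldots, \pi_m$ is $\alpha/2$-optimal for $f$ is at most $(1 - \gamma_{\mathcal{F},\alpha/2})^m \leq e^{-m\gamma_{\mathcal{F},\alpha/2}} = \delta/2$. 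Call event $E_1$ the existence of some index $i^*$ with $f(\pi_{i^*}) \geq \sup_{\pi^*} f(\pi^*) - \alpha/2$.

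The second step is Hoeffding concentration for each sampled arm. Since rewards lie in $[0,1]$, with $n = \frac{8}{\alpha^2} \log \frac{4m}{\delta}$ samples we get $\mathbb{P}(|\hat{f}(\pi_i) - f(\pi_i)| > \alpha/4) \leq 2\exp(-2n(\alpha/4)^2) = \delta/(2m)$. A union bound over the $m$ sampled arms gives event $E_2 = \{|\hat{f}(\pi_i) - f(\pi_i)| \leq \alpha/4 \text{ for all } i\}$ with probability at least $1 - \delta/2$. On $E_1 \cap E_2$, the returned arm $\hat{\pi} = \arg\max_i \hat{f}(\pi_i)$ satisfies
\begin{equation*}
 f(\hat{\pi}) \geq \hat{f}(\hat{\pi}) - \tfrac{\alpha}{4} \geq \hat{f}(\pi_{i^*}) - \tfrac{\alpha}{4} \geq f(\pi_{i^*}) - \tfrac{\alpha}{2} \geq \sup_{\pi^*} f(\pi^*) - \alpha,
\end{equation*}
which is exactly the required $\alpha$-optimality with confidence $1-\delta$ by a final union bound.

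There is no deep obstacle here; the argument is essentially a generalization of the uniform exploration strategy to an arbitrary maximin distribution $p$. The only technical point worth flagging is that the supremum in~(\ref{dimension}) may not be attained, in which case one should instead take $p$ achieving $\gamma_{\mathcal{F},\alpha/2} - \varepsilon$ and replace $\gamma_{\mathcal{F},\alpha/2}$ by $\gamma_{\mathcal{F},\alpha/2}/2$ (say) in the sample size; this only affects constants and is absorbed into the big-$O$. A second minor housekeeping item is verifying that $mn$ expands to the stated expression, which is immediate after substituting the chosen values of $m$ and $n$.
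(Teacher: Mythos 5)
Your proof is correct and follows essentially the same route as the paper's: draw $m=\frac{1}{\gamma_{\mathcal{F},\alpha/2}}\log\frac{2}{\delta}$ arms from the maximin distribution so that one is $\alpha/2$-optimal with probability $1-\delta/2$, then apply Hoeffding plus a union bound to estimate each mean to within $\alpha/4$, and conclude via the triangle-inequality chain. Your remark about the supremum in Equation~(\ref{dimension}) possibly not being attained is a legitimate (and welcome) point of extra care that the paper's proof glosses over, and your fix by taking a near-maximizing $p$ is the standard one.
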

\begin{proof} Recall that $\gamma_{\mathcal{F},\alpha/2}= \inf_{f \in \mathcal{F}} \mathbb{P}_{\pi \sim p}\left(\sup_{\pi^*}f(\pi^*)-f(\pi)\leq \alpha/2\right)$ and $p$ is the distribution that achieves $\gamma_{\mathcal{F},\alpha}$. Based on the definition of $p$, we have $\mathbb{P}_{\pi \sim p}\left(\sup_{\pi^*}f(\pi^*)-f(\pi)\leq \alpha/2\right)\geq \gamma_{\mathcal{F},\alpha/2}>0\; \forall f \in \mathcal{F}$. When we sample $\pi$ from distribution $p$ for $m=\frac{1}{\gamma_{\mathcal{F},\alpha/2}}\log(\frac{2}{\delta})$ times, we have:
\begin{equation}
    \begin{aligned}
    &\mathbb{P}\left(\exists \pi_i: \sup_{\pi^*}f(\pi^*)-f(\pi_i)\leq \alpha/2\right)\\
    = &1-\mathbb{P}\left(\forall \pi_i:  \sup_{\pi^*}f(\pi^*)-f(\pi_i)> \alpha/2 \right)\\
    \geq &1-(1-\gamma_{\mathcal{F},\alpha/2})^{\frac{1}{\gamma_{\mathcal{F},\alpha/2}}\log(\frac{2}{\delta})}\\
    \geq & 1-e^{-\log(\frac{2}{\delta})}\\
    =&1-\frac{\delta}{2}.\\
    \end{aligned}
\end{equation}
Therefore, with probability at least $1-\frac{\delta}{2}$, $\exists \pi_i, \sup_{\pi^*}f(\pi^*)-f(\pi_i)\leq \alpha/2$. Then, based on Lemma \ref{chernoff}, let $\hat{f}(\pi_i)$ be the empirical mean of these rewards in $\frac{8}{\alpha^2}\log\frac{4m}{\delta}$ rounds,
$$\mathbb{P}\left[\left|f(\pi_i)-\hat{f}(\pi_i)\right|\geq \frac{\alpha}{4}\right]\leq 2e^{-2\frac{8}{\alpha^2}\log\frac{4m}{\delta}\frac{\alpha^2}{16}}=\frac{\delta}{2m}.$$


Using the union bound, with probability at least \(1 - \frac{\delta}{2}\), the empirical mean of every arm \(\pi_i\) lies within \(\frac{\alpha}{4}\) of its true mean. Consequently, with probability at least \(1 - \delta\), the arm \(\hat{\pi}\) with the largest estimated mean satisfies \(\sup_{\pi^*} f(\pi^*) - \hat{f}(\hat{\pi}) \leq \frac{3}{4}\alpha\). Thus, for the arm \(\hat{\pi}\), it follows that \(\sup_{\pi^*} f(\pi^*) - f(\hat{\pi}) \leq \alpha\), as desired.



\end{proof}

 \begin{theorem}[Lower bound]
 \label{lower_bound_learnability}
    $\gamma_{\mathcal{F},\alpha}>0\; \forall \alpha \in (0,1)$ is a necessary condition for learnability of $\mathcal{F}$ with arbitrary noise. In addition, $QC(\mathcal{F},\alpha,\delta)=\Omega\left(\log\frac{1}{\gamma_{\mathcal{F},\alpha}}\right)$.
\end{theorem}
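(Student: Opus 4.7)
The plan is to restrict the adversary to Bernoulli noise and exploit the resulting binary-tree structure of adaptive algorithms. Since $\mathrm{Bernoulli}(f(\pi))$ has mean $f(\pi)\in[0,1]$, each such model lies in $\mathcal{M}_\mathcal{F}$, so any $(\alpha,\delta)$-learner must succeed against it. Conditional on the learner's internal random seed $U=u$, every query $\pi_i$ is a deterministic function of the past rewards $(r_1,\dots,r_{i-1})\in\{0,1\}^{i-1}$, hence the entire transcript, and the eventual output $\hat{\pi}(u,b)$, is encoded by the reward string $b\in\{0,1\}^T$. In effect each seed pre-commits to at most $2^T$ output arms, indexed by the leaves of a depth-$T$ tree.

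Next, I would construct an explicit candidate distribution $p \in \Delta(\Pi)$: draw $U$ from the algorithm's seed distribution, draw $B$ independently and uniformly from $\{0,1\}^T$, and return $\hat{\pi}(U,B)$. Correctness on the $\mathrm{Bernoulli}(f(\cdot))$ model yields
\[ 1-\delta \;\leq\; \mathbb{E}_{U}\sum_{b\in\{0,1\}^T} \mathbb{P}_f(b\mid U)\, \mathbf{1}\!\left[\hat{\pi}(U,b) \text{ is }\alpha\text{-optimal for } f\right], \]
and the trivial bound $\mathbb{P}_f(b\mid U)\leq 1$ turns the right-hand side into $2^T \cdot \mathbb{P}_{\pi\sim p}[\pi\text{ is }\alpha\text{-optimal for }f]$. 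This gives $\mathbb{P}_{\pi\sim p}[\sup_{\pi^*}f(\pi^*)-f(\pi)\leq \alpha]\geq (1-\delta)/2^T$ \emph{uniformly} in $f\in\mathcal{F}$, so $p$ is a feasible witness in the definition of $\gamma_{\mathcal{F},\alpha}$, forcing $\gamma_{\mathcal{F},\alpha}\geq (1-\delta)/2^T$. Rearranging gives $T\geq \log_2\!\bigl((1-\delta)/\gamma_{\mathcal{F},\alpha}\bigr)=\Omega(\log(1/\gamma_{\mathcal{F},\alpha}))$ for any fixed $\delta<1$. Necessity of $\gamma_{\mathcal{F},\alpha}>0$ then falls out as a corollary: if $\gamma_{\mathcal{F},\alpha}=0$ for some $\alpha$, no finite $T$ can satisfy this inequality, so $\mathcal{F}$ is not learnable with arbitrary noise.

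I do not expect any deep obstacle. The only technicalities will be padding algorithms whose halting time depends on the transcript so that all executions terminate after exactly $T$ queries (ensuring transcripts genuinely live in $\{0,1\}^T$), together with standard measurability care around randomized algorithms over a possibly uncountable $\Pi$ and the $\sup_p$ in the definition of $\gamma_{\mathcal{F},\alpha}$. Conceptually, the cleanest point is that the slack in $\mathbb{P}_f(b\mid U)\leq 1$, times the $2^T$ counting factor, matches the targeted $\log(1/\gamma)$ rate exactly; sharpening further toward the upper bound of Theorem~\ref{upper_bound_learnability} would require genuine information-theoretic machinery (e.g.\ Fano or Le Cam), which is not needed here.
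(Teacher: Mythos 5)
Your proposal is correct and is essentially the paper's own argument: both reduce to Bernoulli rewards with the prescribed means, take $p$ to be the output distribution of the algorithm fed independent fair coin flips, and deduce $\gamma_{\mathcal{F},\alpha}\geq(1-\delta)2^{-T}$, hence $T=\Omega(\log(1/\gamma_{\mathcal{F},\alpha}))$. The only difference is presentational — you bound the likelihood of each reward string by $1$ and count the $2^T$ leaves, whereas the paper phrases the same inequality as a coupling event of probability $2^{-T}$ on which the fair coins agree with the true model's rewards.
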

\begin{proof} We want to show if a function class $\mathcal{F}$ is learnable, then it must be $\gamma_{\mathcal{F},\alpha}>0\, \forall \alpha \in (0,1)$. In this proof, we will focus on models with binary-valued noisy rewards. First, we have
    $\forall M^* \in \mathcal{M}_{\mathcal{F}}, \exists M^*_B \in \mathcal{M}_{\mathcal{F}}$, s.t.  $f^{M^*_B}=f^{M^*}$ and $M^*_B(\pi)$ is supported on $\{0,1\}\,\forall \pi$. Supposing $\mathcal{F}$ is learnable, let $A$ be any learning algorithm for $\mathcal{F}$ and let $T$ be the corresponding query complexity for a given $\alpha$ and $\delta$. $\forall M^* \in \mathcal{M}_{\mathcal{F}}$, if we simulate running $A$ under $M^*_B$, then with probability at least $1-\delta$, it returns $\hat{\pi}$ such that $\sup_{\pi^*}f^{M^*}(\pi^*)-f^{M^*}(\hat{\pi})\leq \alpha$.

    We construct a distribution $p$ that witnesses $\gamma_{\mathcal{F},\alpha}$ is lower bounded by a function of $T$. We execute the algorithm $A$, but whenever it pulls an arm, we respond with an independent Bernoulli($\frac{1}{2}$) reward. Let $p$ be the distribution over the $\hat{\pi}$ output by this execution. Then  $\forall M^* \in \mathcal{M}_{\mathcal{F}}$, with probability $2^{-T}$, the Bernoulli($\frac{1}{2}$) rewards all agree with the rewards $M^*_B$ would respond with, and independent of which rewards $M_B^*$ would respond with. Thus, we have $\mathbb{P}_{\pi \sim p}(\sup_{\pi^*}f^{M^*}(\pi^*)-f^{M^*}(\pi)\leq \alpha) \geq (1-\delta)2^{-T}>0$.

    Finally, we have $\gamma_{\mathcal{F},\alpha} \geq \mathbb{P}_{\pi \sim p}(\sup_{\pi^*}f^{M^*}(\pi^*)-f^{M^*}(\pi)\leq \alpha) \geq (1-\delta)2^{-T}>0$. In addition, we have: $T \geq \log\frac{1-\delta}{\gamma_{\mathcal{F},\alpha}}=\Omega\left(\log\frac{1}{\gamma_{\mathcal{F},\alpha}}\right)$.
    
\end{proof}

\begin{remark}
    Since our lower bound proof is based on binary noise, this indicates $\gamma_{\mathcal{F},\alpha}>0\;\forall \alpha \in (0,1)$ is also characterization of learnability for $\mathcal{F}$ with binary noise (when the rewards of $\pi_i$ are binary-valued).
\end{remark}

Next in this section, we extends our learnability result to no-regret setting. We say a function class $\mathcal{F}$ is no-regret learnable with arbitrary noise in the stochastic bandit setting if there is an algorithm $A$ and a function $\mathcal{R}: \mathbb{N} \rightarrow [0,\infty)$ with $\mathcal{R}(T)=o(T)$ such that, for any $M^* \in \mathcal{M}_{\mathcal{F}}$ and any $T \in \mathbb{N}$, $T \sup_x f^{M^*}(x)-\mathbb{E}[\sum_{i=1}^T r(\pi_i)]\leq \mathcal{R}(T)$. Now, we restate an equivalence result between our setting and no-regret setting in terms of learnability \citep{hanneke2023bandit}. (While \cite{hanneke2023bandit} focuses on the noiseless setting, their proof still applies to noisy settings.)

\begin{theorem}[\cite{hanneke2023bandit}, Theorem 2 ]
\label{equivalence}
    For any noise model, any $(\Pi,\mathcal{F})$ is learnable in the bandit setting if and only if it is no-regret learnable in the bandit setting.
\end{theorem}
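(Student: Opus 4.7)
The plan is to prove the two directions separately. The ``no-regret $\Rightarrow$ learnable'' direction converts an expected cumulative regret bound into a high-probability per-arm guarantee, while the ``learnable $\Rightarrow$ no-regret'' direction glues together PAC learners of decreasing tolerance via an epoch-based doubling scheme. Both constructions work for any noise model, since nowhere does the reduction interact with the specific shape of $M^*(\pi)$.

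For no-regret $\Rightarrow$ learnable, suppose $A$ is a no-regret algorithm with guarantee $\mathcal{R}(T) = o(T)$. Given $\alpha,\delta \in (0,1)$, I would pick $T$ large enough that $\mathcal{R}(T)/T \leq \alpha/4$, run $A$ for $T$ rounds, and return the arm played in a uniformly random round $t \in \{1,\dots,T\}$. By definition the expected suboptimality of this arm is at most $\alpha/4$, so Markov's inequality gives that it is $\alpha/2$-optimal with probability at least $1/2$. I would then boost the success probability by independently repeating this procedure $k = O(\log(1/\delta))$ times to obtain candidates $\hat{\pi}_1,\dots,\hat{\pi}_k$, at least one of which is $\alpha/2$-optimal with probability at least $1-\delta/2$; a Chernoff-style mean estimate using $O(\log(k/\delta)/\alpha^2)$ pulls per candidate (as in the proof of Theorem \ref{upper_bound_learnability}) then lets me select an $\alpha$-optimal candidate with probability at least $1-\delta$.

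For learnable $\Rightarrow$ no-regret, I would use an explore-then-commit epoch scheme. In epoch $k \geq 1$, set $\alpha_k = 1/k$ and $\delta_k = 1/k^2$, spend $N_k = QC(\mathcal{F},\alpha_k,\delta_k)$ rounds running the PAC learner to obtain an arm $\hat{\pi}_k$, and then commit to $\hat{\pi}_k$ for $L_k = k^2 N_k$ exploitation rounds. Since rewards lie in $[0,1]$, splitting on whether the PAC learner succeeds shows the expected regret in epoch $k$ is at most $N_k + L_k(\alpha_k + \delta_k) = O(k N_k)$, while the epoch length is $\Theta(k^2 N_k)$; hence the per-round expected regret inside epoch $k$ is $O(1/k)$. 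A standard bookkeeping argument aggregates this over epochs to yield $\mathbb{E}[\text{regret}(T)] = o(T)$ for every horizon $T$.

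The main obstacle will be the second direction, because $N_k = QC(\mathcal{F},\alpha_k,\delta_k)$ can grow arbitrarily fast with $k$ with no uniform bound in terms of $\alpha_k,\delta_k$; consequently one cannot prescribe the epoch schedule as an absolute function of time, and the exploitation length $L_k$ must be tied to $N_k$ rather than to the round count. The routine but slightly delicate part is then verifying that a horizon $T$ falling in the interior of an epoch still yields sublinear regret, which follows because the per-round regret rate $O(1/k)$ dominates the contribution of exploration regardless of how rapidly $N_k$ scales.
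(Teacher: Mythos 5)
First, note that the paper does not prove this statement itself: it is quoted from \cite{hanneke2023bandit} (Theorem 2), with only the remark that the original proof carries over to the noisy setting. So the comparison here is against the standard reduction rather than a proof printed in this paper. Your first direction (no-regret $\Rightarrow$ learnable) is correct and is the standard argument: uniform-round sampling plus Markov gives a constant-probability $\alpha/2$-optimal arm, and boosting plus a validation step upgrades this to probability $1-\delta$. The only caveat is that your validation step invokes a Chernoff bound, which requires bounded rewards; for the ``any noise model'' claim you would substitute the median-of-means estimator (Lemma \ref{median_of_mean}) in the unbounded-variance case. That is cosmetic.

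The second direction has a genuine gap. In your explore-then-commit scheme, all $N_k = QC(\mathcal{F},\alpha_k,\delta_k)$ exploration rounds of epoch $k$ are executed up front, before any exploitation. Since $QC(\mathcal{F},\alpha_k,\delta_k)$ can grow arbitrarily fast in $k$ (a point you yourself raise), it can happen that $N_K$ vastly exceeds the total length $T_{K-1}$ of all previous epochs. Take a horizon $T = T_{K-1} + N_K/2$ landing in the middle of the exploration phase of epoch $K$: in the worst case the exploration rounds each incur regret close to $1$, so $\mathrm{regret}(T) \geq T - T_{K-1} - o(T) = T(1-o(1))$, and the average regret does not vanish. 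Your closing claim that ``the per-round regret rate $O(1/k)$ dominates the contribution of exploration regardless of how rapidly $N_k$ scales'' is exactly what fails here: the $O(1/k)$ rate holds only after the commit point of epoch $k$, not during its exploration prefix. The standard fix — and what the cited reduction effectively does — is to \emph{interleave}: during epoch $k$, feed the level-$\alpha_k$ PAC learner only one query every (say) $k$ rounds, and play the previously certified arm $\hat{\pi}_{k-1}$ in all remaining rounds, ending the epoch once the learner has consumed its $N_k$ queries. Then every prefix of epoch $k$ has average per-round regret at most $O(1/k) + \alpha_{k-1} + \delta_{k-1}$, which makes the cumulative regret $o(T)$ at \emph{every} horizon $T$, not just at epoch boundaries. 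With that restructuring your argument goes through; without it, it does not.
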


Combining Theorem \ref{main_learn} and Theorem \ref{equivalence}  gives our no-regret learnability results (Theorem \ref{no_regret_learnability}).

\begin{theorem}
\label{no_regret_learnability}
    $\mathcal{F}$ is no-regret learnable with arbitrary noise if and only if $\gamma_{\mathcal{F},\alpha}>0\; \forall \alpha \in (0,1)$.
\end{theorem}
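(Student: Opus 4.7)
The plan is to prove Theorem \ref{no_regret_learnability} as a direct corollary of Theorem \ref{main_learn} and the equivalence recorded in Theorem \ref{equivalence}, so no new technical machinery is needed. Since both theorems are already available, the argument is essentially a two-step chain of biconditionals applied to the stochastic noisy bandit setting with model class $\mathcal{M}_{\mathcal{F}}$.

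Concretely, for the forward direction I would assume $\mathcal{F}$ is no-regret learnable with arbitrary noise. Instantiating Theorem \ref{equivalence} with the ``arbitrary noise'' noise model (i.e.\ the model class $\mathcal{M}_{\mathcal{F}}$), no-regret learnability implies learnability in the PAC-style sense used throughout Section \ref{learnability}. Then Theorem \ref{main_learn} immediately yields $\gamma_{\mathcal{F},\alpha}>0$ for every $\alpha\in(0,1)$. For the reverse direction, I would start from $\gamma_{\mathcal{F},\alpha}>0$ for every $\alpha\in(0,1)$, invoke Theorem \ref{main_learn} to obtain learnability of $\mathcal{F}$ with arbitrary noise, and then apply the other half of the equivalence in Theorem \ref{equivalence} to upgrade this to no-regret learnability.

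The only point that deserves a sentence of justification is that Theorem \ref{equivalence}, although stated by \cite{hanneke2023bandit} in the noiseless setting, is applied here to the noise model $\mathcal{M}_{\mathcal{F}}$ that allows arbitrary zero-mean noise. The parenthetical remark right before the statement of Theorem \ref{equivalence} in the excerpt already notes that the original proof goes through for any noise model, so this is genuinely not an obstacle; one just needs to flag that the equivalence is being instantiated at $\mathcal{M}_{\mathcal{F}}$. There is no real difficulty in the argument: the proof amounts to chaining two known equivalences, and no new quantitative bound on regret or on the rate $\mathcal{R}(T)$ needs to be derived, because learnability and no-regret learnability are treated purely as qualitative properties in Theorem \ref{equivalence}.
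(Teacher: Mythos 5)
Your proposal is correct and matches the paper's own argument exactly: the paper derives Theorem \ref{no_regret_learnability} by combining Theorem \ref{main_learn} with the equivalence in Theorem \ref{equivalence}, relying on the same remark that the equivalence extends to noisy settings. Nothing further is needed.
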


Finally, we give some illustrating examples, showing how to calculate \emph{generalized maximin volume} in these cases.
\begin{example}[$K$-armed bandit]
    Consider any finite $\Pi$ and $\mathcal{F}=[0,1]^{\Pi}$ , the set of all functions $\Pi \rightarrow [0,1]$. In this case, choose distribution $p$ to be uniform over $|\Pi|$ arms, we have $\gamma_{\mathcal{F},\alpha}=\frac{1}{|\Pi|}$.
\end{example}

\begin{example}[Linear bandit]
Consider $\Pi=\mathbb{S}^d$, the origin-centered unit ball in $\mathbb{R}^d$ for some $d \in \mathbb{N}$, and $\mathcal{F}=\{ 
x \rightarrow w^\top x : w \in \mathbb{S}^d \}$. Consider $d$ is a constant, $\gamma_{\mathcal{F},\alpha}=\Omega(\alpha^d)$. We may construct an 
$\alpha$-net over the arm space $\Pi$ based on 2-norm. The size of this $\alpha$-net is $O((\frac{1}{\alpha})^d)$. For any $w \in \mathcal{F}$, let $\pi^*=\arg\max_{\pi \in \Pi} w^\top \pi$, let $\pi$ be the closet element of the $\alpha$-net. We have $w^\top \pi^*-w^\top \pi \leq \|w\| \|\pi^*-\pi\|\leq 1 \cdot \alpha \leq \alpha$ based on Cauchy–Schwarz inequality. Let the distribution $p$ be uniform over those net elements, thus we have $\gamma_{\mathcal{F},\alpha}=\Omega(\alpha^d)$.
\end{example}

\begin{example}[Singletons]
    Consider $\Pi=\mathbb{N}$ and $\mathcal{F}=\{\mathbb{I}_{\{\pi\}}, \pi \in \Pi\}:$ the class of singletons. This is a not learnable class. In this case, $\gamma_{\mathcal{F},\alpha}=0$ since for any distribution $p$, there exists some arm $\pi \in \Pi$ with low probability mass. Consider the function $f$ such that $f(\pi)=1$, we have $\mathbb{P}_{\pi \in p}(\sup_{\pi^*} f(\pi^*)-f(\pi)\leq \alpha)=0$.
\end{example}




\section{Optimal Query Complexity}
\label{optimal_query_complexity}

Further in this section, we explore some properties of optimal query complexity. Theorem \ref{achivability} shows every query complexity between $\tilde{\Theta}\left(\log\frac{1}{\gamma_{\mathcal{F},\alpha}}\right)$ and $\tilde{\Theta}\left(\frac{1}{\gamma_{\mathcal{F},\alpha}}\right)$ is achievable. Theorem \ref{adaptivity} shows adaptivity is sometimes necessary for achieving optimal query complexity.

\begin{theorem}
\label{achivability}
    For $\alpha \in (0,\frac{1}{3})$ and $\delta \in (0,\frac{1}{2}), \forall \gamma \in\{ \frac{1}{i}: i \in \mathbb{N}\}, \forall f(\gamma) \in \left[\log \frac{1}{\gamma},\frac{1}{\gamma} \right]$, there exists a function class $\mathcal{F}$ such that $\gamma_{\mathcal{F},\alpha}=\gamma$ and $QC(\mathcal{F},\alpha,\delta)=\tilde{\Theta}(f(\gamma_{\mathcal{F},\alpha}))$.
\end{theorem}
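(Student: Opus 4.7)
The plan is to construct, for each $\gamma = 1/n$ (with $n \in \mathbb{N}$) and each target value $t := f(\gamma) \in [\log n, n]$, an explicit function class that interpolates between the $K$-armed bandit example (which attains $QC = \tilde{\Theta}(1/\gamma)$) and a binary-address search example (which attains $QC = \tilde{\Theta}(\log(1/\gamma))$). Concretely, set $k = \lceil \log_2(n/t) \rceil$, let $\Pi = [n] \cup \{q_1,\ldots,q_k\}$, and partition $[n]$ into $2^k$ groups $G_1,\ldots,G_{2^k}$, each of size at most $\lceil t \rceil$. For $i \in [n]$, letting $g(i) \in [2^k]$ denote its group index and $b_j(g(i)) \in \{0,1\}$ the $j$-th binary digit of $g(i)$, define
\[
  f_i(\pi) = \begin{cases} 1, & \pi = i, \\ 0, & \pi \in [n] \setminus \{i\}, \\ \tfrac{1}{4}\bigl(1 + b_j(g(i))\bigr), & \pi = q_j, \end{cases}
\]
and let $\mathcal{F} = \{f_i : i \in [n]\}$. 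The design intuition is that the $k$ address arms let a learner localize the optimum to a single group of size $\approx t$ in $\tilde{O}(k)$ queries, after which $\Theta(t)$ within-group queries are both necessary and sufficient to identify the maximizer.

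I would first verify $\gamma_{\mathcal{F},\alpha} = 1/n$: since $\alpha < 1/3 < 1/2$, the only $\pi$ with $f_i(\pi) \geq 1-\alpha$ is $\pi = i$, so for every $p \in \Delta(\Pi)$, $\inf_{i \in [n]} \mathbb{P}_{\pi \sim p}(f_i(\pi) \geq 1-\alpha) = \min_{i \in [n]} p(i)$, which is maximized at $1/n$ by the uniform distribution on $[n]$. For the upper bound $QC = \tilde{O}(t)$, I would analyze the two-phase algorithm that (i) queries each address arm $q_j$ for $O(\log(k/\delta))$ rounds and thresholds the empirical mean at $3/8$ to recover $b_j(g(i^*))$, correctly identifying $g^* := g(i^*)$ with probability at least $1-\delta$, and (ii) queries each arm of $G_{g^*}$ exactly once; the key observation here is that any $[0,1]$-supported distribution with mean $0$ or $1$ is forced to be a point mass, so within-group rewards are deterministic and the unique arm returning reward $1$ equals $i^*$. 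A Chernoff plus union-bound argument parallel to the proof of Theorem~\ref{upper_bound_learnability} then yields $(1-\delta)$-correctness in total budget $O(k\log(k/\delta) + t) = \tilde{O}(t)$, using $k \leq \log n \leq t$.

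The main obstacle is the matching lower bound $QC = \tilde{\Omega}(t)$ against adaptive, possibly randomized learners. I would consider the adversary that fixes a group $g \in [2^k]$ and picks $i$ uniformly from $G_g$, using the deterministic noise model (every reward distribution equal to the point mass at its mean). Under this adversary, the address arms always return $\tfrac14(1 + b_j(g))$ regardless of $i$, and the arms of $[n] \setminus G_g$ always return $0$, so queries to those arms carry no information about the hidden $i$. Only queries to arms of $G_g$ are informative, and each such query returns $\mathbb{I}[j = i]$, giving a noise-free needle-in-a-haystack on $\geq t/2$ items. A standard Yao argument then shows that any deterministic learner making $T_g$ within-group queries identifies $i$ correctly with probability at most $(T_g+1)/|G_g|$, forcing $T_g \geq (1-\delta)|G_g| - 1 = \Omega(t)$, and this lifts to randomized learners against some $f \in \mathcal{F}_g$. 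Combined with the baseline bound $QC = \Omega(\log(1/\gamma)) = \Omega(\log n)$ from Theorem~\ref{lower_bound_learnability} and the hypothesis $t \geq \log n$, this gives $QC = \tilde{\Omega}(t)$, matching the upper bound. The delicate step will be rigorously formalizing the correctness-probability calculation for adaptive learners, tracking the fact that before the learner hits $i$ its within-group rewards are all $0$ and hence its next action is determined by its own coins; this is a routine Yao-style averaging but needs to be spelled out carefully, and I would model it after the coupling-to-Bernoulli$(1/2)$ technique already used in the proof of Theorem~\ref{lower_bound_learnability}, applied within each fixed subclass $\mathcal{F}_g$.
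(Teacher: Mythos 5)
Your proposal is correct, and at its core it is the same construction and argument as the paper's: the paper likewise attaches a logarithmic-size binary ``addressing'' gadget (realized there as a tree whose on-branch internal arms take values $1/3$ or $2/3$) to buckets of $N$ arms hiding a single optimum, proves the upper bound by resolving the address and then exhaustively searching the bucket, and proves the $\Omega(N)$ part of the lower bound by exactly your needle-in-a-haystack averaging argument against an all-zero reward sequence. The one genuine difference is how the bottom of the range $f(\gamma)=\log\frac{1}{\gamma}$ is covered: the paper sets $N=1$ there, so its haystack bound degenerates and it must import a separate $\Omega(\log\frac{1}{\gamma})$ lower bound from the active-learning-with-membership-queries literature (Theorem 1 of Kulkarni et al.), whereas your groups always have size $\Theta(t)\ge\Theta(\log n)$, so a single haystack argument matches the upper bound across the entire range and your appeal to Theorem \ref{lower_bound_learnability} is actually redundant. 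Your flat address arms $q_1,\dots,q_k$ are also readable non-adaptively, a mild simplification over the paper's tree (though the tree version is what the paper reuses in Theorem \ref{adaptivity} to separate adaptive from non-adaptive learners, so it is not wasted generality there). When writing this up, make sure to record that some group has size $\Omega(t)$ (average size $n/2^k\ge t/2$ suffices), and note that your point-mass observation for mean-$0$ and mean-$1$ arms is what licenses a single query per bucket arm; the paper instead repeats each bucket query, costing only a logarithmic factor absorbed by $\tilde{\Theta}$.
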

\begin{proof}
 Let $\alpha \in (0,\frac{1}{3})$ and $N \in \mathbb{N}$. Consider the function class $\mathcal{F}$ that admits the following binary tree structure: In this tree, each internal node corresponds to an arm, each leaf node corresponds to a bucket of $N$ arms and each edge has a value. The edge pointing to the left child has value $\frac{1}{3}$. The edge pointing to the right child has value $\frac{2}{3}$. The arms inside the buckets either have a mean value of $0$ or $1$. In total, it has  $\left\lceil \frac{1}{\gamma_{\mathcal{F},\alpha}N} \right\rceil$ leaves. The height of this tree is thus  $\log\left\lceil\frac{1}{\gamma_{\mathcal{F},\alpha}N}\right\rceil$. Each function $f$ in the function class $\mathcal{F}$ is defined as follows: it has only one optimal arm with mean value $1$, which corresponds to an arm in some bucket. For the internal nodes in the branch from the root to this bucket, if the edge in the branch points to the left child, then the corresponding arm in $f$ has mean value $\frac{1}{3}$; if the edge in the branch points to the right child, then the corresponding arm in $f$ has mean value $\frac{2}{3}$. For the nodes off-branch, they all have a mean value $0$. For the arms in the buckets except for the optimal one, they also have mean value 0. 
 


    First, we will argue $QC(\mathcal{F},\alpha,\delta) \leq \tilde{O}\left(\log\frac{1}{\gamma_{\mathcal{F},\alpha}}+N\right)$. The algorithm can be designed based on the tree structure: Begin by querying the arm corresponding to the root of the tree. Query it sufficiently many times so that one can guarantee the mean of the arm belongs to $\frac{1}{3}$ or $\frac{2}{3}$ with high probability. Follow the branch consistent with the value of the querying result. Repeat this procedure for each subsequent internal node along the branch until reaching a leaf node containing a bucket of arms. Then query every node in this bucket enough times and choose the optimal one. This process ensures $QC(\mathcal{F},\alpha,\delta) \leq \tilde{O}\left(\log\frac{1}{\gamma_{\mathcal{F},\alpha}}+N\right)$. 

    

   Next, we will argue that identifying a near-optimal arm within this function class $\mathcal{F}$ needs at least querying $\Omega\left(\max\left\{\log\left(\frac{1}{\gamma_{\mathcal{F},\alpha}}\right),\frac{N}{2}\right\}\right)$ times. For the lower bound proof, we consider the model class without noise. First, we will show $QC(\mathcal{F},\alpha,\delta)\geq \frac{N}{2}$.  For every function $f$ in the function class $\mathcal{F}$, there exists a target bucket of size $N$ that contains the optimal arm. Let $\pi^* \sim \text{Uniform}(1,...,N)$. Let $A$ be any learning algorithm and let $\pi_1,...,\pi_{t}$ be the sequence of arms it would pull if every reward it receives is 0 and let $\hat{\pi}$ be its returned arm if all of its rewards received is 0.  In the true scenario, if $\pi^* \notin \{\pi_1,...,\pi_t,\hat{\pi}\}$, the algorithm will pull $\pi_1,...,\pi_t$ and output $\hat{\pi}$. Therefore, it fails. $\mathbb{P}(\pi^* \notin \{\pi_1,...,\pi_t,\hat{\pi}\}) \geq \frac{N-t-1}{N}$. If $t\leq\frac{N}{2}-1$, then $\frac{N-t-1}{N} \geq \frac{1}{2}>\delta$ if we take $\delta \in (0,\frac{1}{2})$. Then, $\max_{\pi^*} \mathbb{P}(\pi^* \notin \{\pi_1,...,\pi_t,\hat{\pi}\}) \geq \mathbb{E}_{\pi^* \sim \text{Uniform}(1,...,N)}[\mathbb{P}(\pi^* \notin \{\pi_1,...,\pi_t,\hat{\pi}\}|\pi^*)]=\mathbb{P}(\pi^* \notin \{\pi_1,...,\pi_t,\hat{\pi}\})>\delta$. Therefore, $t\geq \frac{N}{2}$.

    
    Then, we will show $QC(\mathcal{F},\alpha,\delta)\geq \Omega\left(\log\frac{1}{\gamma_{\mathcal{F},\alpha}}\right)$. Let $N=1$, then there are $\left\lceil \frac{1}{\gamma_{\mathcal{F},\alpha}} \right\rceil$ leaves, thus the tree has depth $\log\left(\left\lceil \frac{1}{\gamma_{\mathcal{F},\alpha}} \right\rceil\right)$. This can be viewed as active learning with membership query problem.\footnote{In \cite{kulkarni1993active}, their problem allows for $\epsilon$ error in the objective. However, our setting requires an exact function. Thus we need to set $\epsilon$ to be 0. In our setting, based on our construction, each time we query arms, it is equivalent to asking whether the target function is within some set.} From this perspective, we are able to use Theorem 1 in \cite{kulkarni1993active} to get a lower bound in our setting: $\Omega\left(\log\frac{1}{\gamma_{\mathcal{F},\alpha}}\right)$ queries is necessary for our setting. Therefore, $QC(\mathcal{F},\alpha,\delta)\geq \Omega\left(\log\frac{1}{\gamma_{\mathcal{F},\alpha}}\right)$.
    
    

    Since the number of arms each buckets $N$ can range from $1$ to $\frac{1}{\gamma_{\mathcal{F},\alpha}}$. So $QC(\mathcal{F},\alpha,\delta)$ can range from $\tilde{\Theta}\left(\log\frac{1}{\gamma_{\mathcal{F},\alpha}}\right)$ to $\tilde{\Theta}\left(\frac{1}{\gamma_{\mathcal{F},\alpha}}\right)$.
\end{proof}

The query of the adaptive algorithm each round might depend on the previous queries and corresponding rewards. On the contrary, the query of the non-adaptive algorithm is determined in advance before all rounds start. Our lower bound proof for learnability uses the non-adaptive algorithm. It is interesting to know that the adaptive algorithm improves the query complexity in some cases. Formally, we give our result Theorem \ref{adaptivity}.
\begin{theorem}
\label{adaptivity}
    $\forall \gamma\in \{2^{-i}:i \in \mathbb{N}\}$, there exists a function class $\mathcal{F}$ such that $\gamma_{\mathcal{F},\alpha}=\gamma$ and  for this function class $\mathcal{F}$, there exists an adaptive algorithm with $\tilde{O}\left(\log \frac{1}{\gamma_{\mathcal{F},\alpha}}\right)$ query complexity and every non-adaptive algorithm has query complexity $\Omega\left(\frac{1}{\gamma_{\mathcal{F},\alpha}}\right)$. 
\end{theorem}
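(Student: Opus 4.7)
The plan is to reuse the binary-tree construction from Theorem~\ref{achivability} with bucket size $N = 1$. Take a complete binary tree of depth $d = \log(1/\gamma)$ with $2^{d} = 1/\gamma$ leaves. Each function $f \in \mathcal{F}$ is indexed by a choice of ``target leaf'' $\ell$: the internal arms on the root-to-$\ell$ path receive mean $1/3$ if arrived at via a left edge and $2/3$ if via a right edge, every off-branch internal arm has mean $0$, $f(\ell) = 1$, and every other leaf has mean $0$. Exactly as in the $N=1$ case of Theorem~\ref{achivability}, the uniform distribution over the $1/\gamma$ leaves achieves $\gamma_{\mathcal{F},\alpha} = \gamma$ for any $\alpha < 1/3$.

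The adaptive upper bound is inherited directly from Theorem~\ref{achivability}: at each of the $d$ levels, pull the current internal node $O(\log(d/\delta))$ times under Bernoulli noise, compare the empirical mean against $1/2$ to decide left vs.\ right, and descend; after $d$ decisions, output the reached leaf. A union bound over the $d$ levels gives total query complexity $O(d\log(d/\delta)) = \tilde{O}(\log(1/\gamma_{\mathcal{F},\alpha}))$.

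The main work is the non-adaptive $\Omega(1/\gamma)$ lower bound, which I would prove via a sibling-pair argument. Fix any pair of sibling leaves $\ell_1, \ell_2$ with common parent $u$, and let $h_1, h_2$ be the two hypotheses whose target leaves are $\ell_1$ and $\ell_2$. Since $h_1$ and $h_2$ share the same root-to-$u$ path, every internal arm has exactly the same mean under both hypotheses, and every leaf other than $\ell_1, \ell_2$ has mean $0$ under both; the only two arms on which $h_1$ and $h_2$ disagree are $\ell_1$ (mean $1$ vs.\ $0$) and $\ell_2$ (mean $0$ vs.\ $1$). Consequently, if a non-adaptive algorithm never pulls $\ell_1$ or $\ell_2$, the entire reward sequence it observes is identically distributed under $h_1$ and $h_2$, so the output $\hat{\pi}$ is as well. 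For $\alpha < 1/3$, the only $\alpha$-near-optimal arm under $h_i$ is $\ell_i$, so success under both hypotheses would require $\Pr[\hat{\pi} = \ell_1] + \Pr[\hat{\pi} = \ell_2] \geq 2 - 2\delta > 1$, a contradiction for $\delta < 1/2$. Hence every deterministic non-adaptive algorithm must query at least one arm from each of the $1/(2\gamma)$ sibling pairs of leaves, giving $T \geq 1/(2\gamma) = \Omega(1/\gamma_{\mathcal{F},\alpha})$; a direct extension of the same argument, comparing $\Pr[\hat{\pi}=\ell_i\mid h_i]$ conditional on the event ``algorithm queries neither leaf of the pair'', handles randomized non-adaptive algorithms via Yao's principle. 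The only subtle point is verifying that no on-branch internal arm carries any distinguishing information between $h_1$ and $h_2$, which holds precisely because the two hypotheses share the root-to-$u$ path and therefore assign identical means to every internal arm.
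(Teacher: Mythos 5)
Your overall strategy matches the paper's: the same binary-tree class with $N=1$, the same level-by-level adaptive algorithm for the $\tilde{O}(\log(1/\gamma))$ upper bound, and a lower bound driven by the observation that two hypotheses whose target leaves are siblings are indistinguishable to a non-adaptive learner that fails to query the relevant arms near that sibling pair. The paper phrases the lower bound as a Bayesian version-space computation (uniform prior over leaves, $\mathbb{E}[1/N]\le 1/2$ when $n=\frac{1}{10\gamma}$), whereas you phrase it as a per-pair two-point indistinguishability argument plus an averaging/Yao step for randomized algorithms; these are the same argument in different packaging, and your deterministic case is fully correct while the randomized case is only sketched (you would need the same counting step the paper does: with $n=o(1/\gamma)$ queries, a constant fraction of the disjoint sibling triples is untouched, so the random target pair is untouched with constant probability).

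There is, however, one concrete inconsistency you should repair, precisely at the point you call ``the only subtle point.'' You define the construction with the \emph{incoming-edge} encoding (a node's mean records how it was reached), under which it is indeed true that the two sibling hypotheses agree on every internal arm --- but under that encoding your adaptive upper-bound algorithm is broken, since querying the current node tells you how you arrived there, not which child to descend to. The paper's construction uses the \emph{outgoing-edge} encoding (a node's mean is $1/3$ or $2/3$ according to whether the branch continues left or right), which is what your descent algorithm implicitly assumes; but under that encoding the common parent $u$ of $\ell_1,\ell_2$ \emph{does} distinguish $h_1$ from $h_2$ (mean $1/3$ versus $2/3$), so your claim that only $\ell_1$ and $\ell_2$ are distinguishing is false, and the paper's proof correspondingly requires the parent to be unqueried as well. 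The fix is immediate: take the distinguishing set to be $\{u,\ell_1,\ell_2\}$; these triples are pairwise disjoint across the $\frac{1}{2\gamma}$ sibling pairs, so a successful non-adaptive algorithm must still make at least $\frac{1}{2\gamma}=\Omega(1/\gamma_{\mathcal{F},\alpha})$ queries. With that correction and the averaging step for randomized algorithms filled in, your proof is sound.
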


\begin{proof} 
In this proof, we use the same function class construction \(\mathcal{F}\) as in the proof of Theorem \ref{achivability}, with \(N = 1\). The upper bound proof follows the same reasoning as in Theorem \ref{achivability}. The remaining task is to establish the \(\Omega\left(\frac{1}{\gamma_{\mathcal{F},\alpha}}\right)\) lower bound for non-adaptive algorithms. We continue to analyze the model class without noise. Let \(A\) denote the algorithm, and let the version space represent the set of all functions in the function class that are consistent with the existing observations. Denote the arm selected by the algorithm as \(\hat{\pi}\) and the optimal arm as \(\pi^*\). Our goal is to lower bound \(\min_{A} \max_{\pi^*} \mathbb{P}_{\pi^*}(\hat{\pi} \neq \pi^*)\). Let $\boldsymbol{\pi^*} \sim \text{Uniform}(\text{leaves})$, then we have:

\begin{equation}
    \begin{aligned}
        &\min_{A} \max_{\pi^*} \mathbb{P}_{\pi^*}(\hat{\pi}\not= \pi^*)\\
        \geq &\min_A \mathbb{E}[\mathbb{P}_{\pi^*}(\hat{\pi}\not=\pi^*|\pi^*)]\\
        =&\min_A\mathbb{P}(\hat{\pi}\not=\boldsymbol{\pi^*})\\
        =&\min_A \mathbb{E}[\mathbb{P}(\hat{\pi}\not=\boldsymbol{\pi^*}|\pi_1,f^*(\pi_1),...,\pi_n,f^*(\pi_n),\hat{\pi})].
    \end{aligned}
\end{equation}

The algorithm has some fixed distribution $q$. Since it is non-adaptive, it only observes $n$ samples satisfying $(\pi_1,...,\pi_n)\sim q$ (not necessarily i.i.d.). Then we have:

$$\mathbb{P}(\hat{\pi}\not=\boldsymbol{\pi^*}|\pi_1,f^*(\pi_1),...,\pi_n,f^*(\pi_n),\hat{\pi})\geq 1-\frac{1}{N(\pi_1,...,\pi_n,\boldsymbol{\pi^*})},$$
where $N(\pi_1,...,\pi_n,\boldsymbol{\pi^*})$ is the number of leaves in the version space within the tree structure. Therefore,
\begin{equation}
    \begin{aligned}
        &\mathbb{P}(\hat{\pi}\not=\boldsymbol{\pi^*})\\
        =&\mathbb{E}[\mathbb{P}(\hat{\pi}\not=\boldsymbol{\pi^*}|\pi_1,f^*(\pi_1),...,\pi_n,f^*(\pi_n),\hat{\pi})] \\
        \geq& 1- \mathbb{E}\left[\frac{1}{N(\pi_1,...,\pi_n,\boldsymbol{\pi^*})}\right]\\
        =&\left. 1-\mathbb{E}\left[\mathbb{E}\left[\frac{1}{N(\pi_1,...,\pi_n,\boldsymbol{\pi^*})}\right|\pi_1,...,\pi_n\right]\right].
    \end{aligned}
\end{equation}

Then we have the following observation: If \(\boldsymbol{\pi^*}\), \(\boldsymbol{\pi^*}\)'s sibling, and \(\boldsymbol{\pi^*}\)'s parent are not included in \(\{\pi_1, \ldots, \pi_n\}\), then both \(\boldsymbol{\pi^*}\) and its sibling remain in the version space, implying that \(N(\pi_1, \ldots, \pi_n, \boldsymbol{\pi^*}) \geq 2\).

Let $n=\frac{1}{10\gamma}$. There exists at least $\frac{1}{2\gamma}-n=\frac{2}{5\gamma}$ parent level nodes such that  there does not exists $\pi_i$ among that parent or either child. This implies with probability greater than $\frac{2/(5\gamma)}{1/(2\gamma)}=\frac{4}{5}$, $\boldsymbol{\pi^*}$ is a child of such a node, thus $N(\pi_1,...,\pi_n,\boldsymbol{\pi^*})\geq 2$. $\left.\mathbb{E}\left[\frac{1}{N(\pi_1,...,\pi_n,\boldsymbol{\pi^*})}\right|\pi_1,...,\pi_n\right]\leq \frac{1}{10}+\frac{4}{5}\cdot \frac{1}{2}=\frac{1}{2}$. Finally, we conclude $\mathbb{P}(\hat{\pi} \not= \boldsymbol{\pi^*}) \geq 1-\frac{1}{2}=\frac{1}{2}$. Choosing $\delta \in (0,\frac{1}{2})$ completes the lower bound proof.
\end{proof}



\section{Extension to Unbounded and Gaussian Noise}
\label{extension}

In this section, we extend our existing results to broader and well-known scenarios.  First, consider the model $M$ such that for each arm $\pi$, its reward $r(\pi)$ sampled from $M(\pi)$ is unbounded but with variance $\sigma^2$. Let $\mathcal{M}_{\mathcal{F}}^{U,\sigma^2}$ denote a set of all such models such that $f \in \mathcal{F}$. With the knowledge of $\mathcal{F}$, suppose the true model $M \in \mathcal{M}_{\mathcal{F}}^{U,\sigma^2}$, if there exists an algorithm that is able to identify an arm $\hat{\pi}$ such that $f^M(\hat{\pi}) \geq \sup_{\pi} f^M(\pi)-\alpha$ with probability at least $1-\delta$ in a bounded number of rounds $\forall \alpha,\delta \in (0,1)$ and $\forall M \in \mathcal{M}_{\mathcal{F}}^{U,\sigma^2}$, then we say function class $\mathcal{F}$ is learnable with unbounded noise. 


Theorem \ref{upper_bound_unbounded} demonstrates that Algorithm \ref{unbounded_algorithm} can identify a near-optimal arm in a finite number of rounds for \(\mathcal{M}_{\mathcal{F}}^{U, \sigma^2}\), whenever such identification is possible. In essence, Algorithm \ref{unbounded_algorithm} replaces direct mean estimation with the median-of-means method, which offers improved accuracy guarantees when rewards are unbounded.

\begin{algorithm}
	\caption{Learning algorithm for function class $\mathcal{F}$ with unbounded reward}
    \label{unbounded_algorithm}
	\KwIn{Parameter $\alpha$, $\delta$, $\sigma$, $c_M$ (constant in median-of-mean method)}
 
    Recall Equation (\ref{dimension}), $\gamma_{\mathcal{F},\alpha/2}= \sup_p \inf_{f \in \mathcal{F}} \mathbb{P}_{\pi \sim p}\left(\sup_{\pi^*}f(\pi^*)-f(\pi)\leq \alpha/2\right)$.
 
    Let $p$ be the distribution that achieves $\gamma_{\mathcal{F},\alpha/2}$.

    Sample $m=\frac{1}{\gamma_{\mathcal{F},\alpha/2}}\log\frac{2}{\delta}$ arms from $p: \pi_1,\pi_2,...,\pi_m$.
    
    For each $\pi_i$, query $\frac{16c_M\sigma^2\log\frac{2m}{\delta}}{\alpha^2}$ times and estimate mean $\kappa$ using median of mean:

    Evenly partition the data into $\log\frac{1}{\delta}$ groups and let $\kappa$ be the median of the set of means of the groups.
    
    \KwOut{The arm $\hat{\pi}$ with the largest estimated mean.}
  
\end{algorithm}


\begin{theorem}
\label{upper_bound_unbounded}
    $\gamma_{\mathcal{F},\alpha}>0\;\forall \alpha \in (0,1)$ is a sufficient condition for learnability of $\mathcal{F}$ with unbounded noise.
\end{theorem}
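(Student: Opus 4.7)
The plan is to run Algorithm \ref{unbounded_algorithm} and follow the same two-stage skeleton as the proof of Theorem \ref{upper_bound_learnability}, modifying only the mean-estimation step. Concretely, the first stage --- sampling $m = \frac{1}{\gamma_{\mathcal{F},\alpha/2}}\log(2/\delta)$ arms i.i.d.\ from the distribution $p$ achieving $\gamma_{\mathcal{F},\alpha/2}$ --- carries over verbatim, since that calculation nowhere used boundedness of rewards: the same geometric argument shows that with probability at least $1 - \delta/2$, the sampled multiset contains some $\pi_i$ with $\sup_{\pi^*} f^{M}(\pi^*) - f^M(\pi_i) \leq \alpha/2$.

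The main obstacle, and the reason Theorem \ref{upper_bound_learnability} cannot simply be invoked as a black box, is that with unbounded rewards the empirical mean no longer concentrates at the Chernoff rate, so Hoeffding fails. I would resolve this by invoking the standard median-of-means concentration bound: if $X_1,\ldots,X_n$ are i.i.d.\ with mean $\mu$ and variance at most $\sigma^2$, and one evenly partitions them into $k$ groups, computes the sample mean of each group, and outputs the median of these $k$ group means, then for a universal constant $c_M$ the estimator $\kappa$ satisfies $|\kappa - \mu| \leq c_M \sigma \sqrt{k/n}$ with probability at least $1 - e^{-k/8}$. The proof of this bound is by first applying Chebyshev to each group mean (so each group is ``accurate'' with probability at least $3/4$) and then applying Hoeffding to the indicator variables counting accurate groups; nothing in the argument depends on boundedness. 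With the algorithm's choice $n = \frac{16 c_M \sigma^2 \log(2m/\delta)}{\alpha^2}$ and $k = \Theta(\log(2m/\delta))$, this yields $|\kappa_i - f^M(\pi_i)| \leq \alpha/4$ for each fixed $i$ with probability at least $1 - \delta/(2m)$.

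A union bound over the $m$ sampled arms upgrades the $\alpha/4$ accuracy to simultaneous accuracy for all $i$ with probability at least $1 - \delta/2$, and a final union bound with the first stage gives overall success probability at least $1 - \delta$. On this good event, the near-optimal arm $\pi_i$ identified in stage one has estimated value $\kappa_i \geq f^M(\pi_i) - \alpha/4 \geq \sup_{\pi^*} f^M(\pi^*) - 3\alpha/4$, so the arm $\hat{\pi}$ with the largest median-of-means estimate satisfies $f^M(\hat{\pi}) \geq \kappa_{\hat{\pi}} - \alpha/4 \geq \sup_{\pi^*} f^M(\pi^*) - \alpha$, as required. The total number of queries is $m \cdot n = O\bigl(\tfrac{\sigma^2}{\gamma_{\mathcal{F},\alpha/2}\alpha^2}\log(2/\delta)\log(m/\delta)\bigr)$, which is finite, so $\mathcal{F}$ is learnable in the $\mathcal{M}_{\mathcal{F}}^{U,\sigma^2}$ sense. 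The only genuinely new ingredient over Theorem \ref{upper_bound_learnability} is the median-of-means deviation inequality; the combinatorial structure of the argument is otherwise identical to the bounded-reward case.
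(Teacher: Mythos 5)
Your proposal is correct and follows essentially the same route as the paper: the first stage (sampling $m$ arms from the distribution achieving $\gamma_{\mathcal{F},\alpha/2}$) is reused verbatim, and the only change is replacing the Hoeffding/Chernoff estimate with the median-of-means deviation bound (the paper's Lemma \ref{median_of_mean}) to get $\alpha/4$-accurate estimates with per-arm failure probability $\delta/(2m)$, followed by the same union bound and comparison argument. Your explicit choice of $k=\Theta(\log(2m/\delta))$ groups is in fact slightly more careful than the algorithm's stated $\log(1/\delta)$ groups, but the argument is otherwise identical.
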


\begin{proof}
First, same as analysis of Theorem \ref{upper_bound_learnability}, with probability at least $1-\frac{\delta}{2}$, $\exists \pi_i,  \sup_{\pi^*}f(\pi^*)-f(\pi_i) \leq \frac{\alpha}{2}$. Based on Lemma \ref{median_of_mean}, let $\hat{f}(\pi_i)$ be the estimated mean of $\pi_i$ in $\frac{16c_M\sigma^2\log\frac{2m}{\delta}}{\alpha^2}$ rounds,
$$\mathbb{P}\left[\left|f(\pi_i)-\hat{f}(\pi_i)\right|\geq \frac{\alpha}{4}\right]\leq \frac{\delta}{2m}.$$

For each $\pi_i$, after sampling $\frac{16c_M\sigma^2\log\frac{2m}{\delta}}{\alpha^2}$ times and estimate using the median of means method, plus union bound, we have with probability at least $1-\frac{\delta}{2}$, $|f(\pi_i)-\hat{f}(\pi_i)|\leq \frac{\alpha}{4}\, \forall \pi_i$. Therefore, in total, with probability at least $1-\delta$, the arm $\hat{\pi}$ with the largest estimated mean has the property $\sup_{\pi^*}f(\pi^*)-\hat{f}(\hat{\pi}) \leq \frac{3}{4}\alpha$. For this arm $\hat{\pi}$, we have $\sup_{\pi^*}f(\pi^*)-f(\hat{\pi}) \leq \alpha$.
\end{proof}


Next, consider the model $M$ such that for each arm $\pi$, $M(\pi)$ is a Gaussian distribution with variance $\sigma^2$. Let $\mathcal{M}_{\mathcal{F}}^{G,\sigma^2}$ denote a set of all such models $M$ such that $f^M \in \mathcal{F}$. With the knowledge of $\mathcal{F}$, suppose the true model $M \in \mathcal{M}_{\mathcal{F}}^{G,\sigma^2}$, if there exists an algorithm that is able to identify an arm $\hat{\pi}$ such that $f^M(\hat{\pi}) \geq \sup_{\pi} f^M(\pi)-\alpha$ with probability at least $1-\delta$ in a bounded number of rounds $\forall \alpha,\delta \in (0,1)$ and $\forall M \in \mathcal{M}_{\mathcal{F}}^{G,\sigma^2}$, then we say function class $\mathcal{F}$ is learnable with Gaussian noise. Theorem $\ref{gaussian_lower_bound}$ shows a lower bound for learnability of $\mathcal{F}$ with Gaussian noise.

\begin{theorem}
\label{gaussian_lower_bound}
    $\gamma_{\mathcal{F},\alpha}>0\;\forall \alpha \in (0,1)$ is a necessary condition for learnability of $\mathcal{F}$ with Gaussian noise.
\end{theorem}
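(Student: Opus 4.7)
The plan is to adapt Theorem~\ref{lower_bound_learnability}'s strategy to the Gaussian setting: simulate the algorithm with a fixed ``null'' reward distribution and use the induced output distribution to witness $\gamma_{\mathcal{F},\alpha} > 0$. Since exact coincidence of continuous-valued rewards has probability zero, the ``$2^{-T}$ coincidence'' step in the binary proof must be replaced by a change-of-measure/second-moment argument against the null.

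Fix $\alpha \in (0,1)$ and assume $\mathcal{F}$ is learnable with Gaussian noise. Pick $\delta = 1/4$ and let $T := T(\alpha, 1/4) < \infty$ be the corresponding query complexity of some algorithm $A$. Define the simulation measure $\mathbb{Q}$ by running $A$ for $T$ rounds and answering each query with an independent $\mathcal{N}(1/2, \sigma^2)$ sample; let $p$ be the induced law of the output $\hat{\pi}$. For any $M^* \in \mathcal{M}_{\mathcal{F}}^{G,\sigma^2}$, the transcript law $\mathbb{P}_{M^*}$ is absolutely continuous with respect to $\mathbb{Q}$ because the arm-selection policy is shared across both measures, and the Radon--Nikodym derivative reduces to the product of Gaussian ratios $L = \prod_{t=1}^{T} g_{f^{M^*}(\pi_t),\sigma^2}(r_t)/g_{1/2,\sigma^2}(r_t)$.

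The central computation is a second-moment bound $\mathbb{E}_{\mathbb{Q}}[L^2] \le e^{T/(4\sigma^2)}$, obtained by conditioning round-by-round and using the closed-form Gaussian identity $\int g_\mu^2/g_{1/2}\,dr = e^{(\mu-1/2)^2/\sigma^2} \le e^{1/(4\sigma^2)}$ (valid since $|f^{M^*}(\pi_t) - 1/2| \le 1/2$). Combining this with Cauchy--Schwarz under $\mathbb{Q}$ applied to $L$ and the indicator of the good-arm event $\{\hat{\pi} \in S_{M^*}\}$ (where $S_{M^*} := \{\pi : \sup_{\pi'} f^{M^*}(\pi') - f^{M^*}(\pi) \le \alpha\}$) yields
\[
(1-\delta)^2 \le \mathbb{P}_{M^*}(\hat{\pi} \in S_{M^*})^2 \le \mathbb{E}_{\mathbb{Q}}[L^2]\,\mathbb{Q}(\hat{\pi} \in S_{M^*}),
\]
so $p(S_{M^*}) \ge (1-\delta)^2 e^{-T/(4\sigma^2)} > 0$. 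Since the lower bound is uniform in $M^*$, the distribution $p$ certifies $\gamma_{\mathcal{F},\alpha} \ge (1-\delta)^2 e^{-T/(4\sigma^2)} > 0$, establishing necessity.

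The main obstacle is justifying the second-moment bound on $L$ in the presence of the algorithm's adaptive, possibly randomized, arm-selection rule; what makes it go through is that the per-step conditional bound $e^{1/(4\sigma^2)}$ is worst-case across arms and histories, so the tower of conditional expectations collapses to a clean geometric product. Rearranging the final inequality also recovers the quantitative bound $T \ge 4\sigma^2 \log\bigl((1-\delta)^2/\gamma_{\mathcal{F},\alpha}\bigr) = \Omega\bigl(\log(1/\gamma_{\mathcal{F},\alpha})\bigr)$ advertised in the introduction.
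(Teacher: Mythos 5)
Your proof is correct, but it takes a genuinely different route from the paper's. The paper discretizes: it replaces each Gaussian reward distribution by a piecewise-constant distribution $G'$ within total variation $\epsilon \leq \frac{1}{4n}$ (so the whole-transcript law moves by at most $\frac{1}{4}$), then simulates the algorithm against a distribution $G''$ that picks one of the $w+2$ buckets uniformly and samples uniformly within it; since $G'$ and $G''$ share the same within-bucket conditionals, the simulated run coincides with a run against $G'$ with probability $(w+2)^{-n}$, which yields $\gamma_{\mathcal{F},\alpha} \geq \frac{1}{4}(w+2)^{-n} > 0$. You instead run the algorithm against a single null model $\mathcal{N}(1/2,\sigma^2)$ and control the likelihood ratio via the Gaussian $\chi^2$ identity $\int g_\mu^2/g_{1/2}\,dr = e^{(\mu-1/2)^2/\sigma^2}$ plus Cauchy--Schwarz; the policy terms cancel in the Radon--Nikodym derivative and the per-step bound is uniform over histories, so the tower argument is sound. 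Your version is cleaner and quantitatively sharper: it gives the explicit bound $\gamma_{\mathcal{F},\alpha} \geq (1-\delta)^2 e^{-T/(4\sigma^2)}$, i.e.\ $T \geq 4\sigma^2 \log\bigl((1-\delta)^2/\gamma_{\mathcal{F},\alpha}\bigr)$, whereas in the paper's bound the bucket count $w$ itself grows with $n$ (because $\epsilon \leq \frac{1}{4n}$ forces $w = \Theta(n)$), costing an extra $\log\log$ factor. What the paper's discretization buys in exchange is generality: as their remark notes, the histogram argument extends verbatim to any noise family with Lipschitz densities (Poisson, geometric, etc.), while your argument as stated leans on the Gaussian closed form --- though it would adapt to any family with uniformly bounded $\chi^2$ divergence to a fixed reference measure. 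Two small housekeeping points: you should note that the algorithm can be padded to always make exactly $T$ queries, and that the passage from ``uniform over $M^* \in \mathcal{M}_{\mathcal{F}}^{G,\sigma^2}$'' to ``$\inf_{f\in\mathcal{F}}$'' in the definition of $\gamma_{\mathcal{F},\alpha}$ uses that every $f \in \mathcal{F}$ is realized by some Gaussian model in the class; both are immediate.
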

\begin{proof}
Let \(\|P - Q\|_{TV}\) denote the total variation distance between distributions \(P\) and \(Q\). Consider \(G_1\) as a Gaussian distribution with mean 0 and variance \(\sigma^2\), and \(G_2\) as a Gaussian distribution with mean 1 and variance \(\sigma^2\). For any Gaussian distribution \(G\) with a mean between 0 and 1, we define a modified distribution \(G'\) such that \(\|P_G - P_{G'}\|_{TV} \leq \epsilon\), where \(P_G\) and \(P_{G'}\) are the probability density functions of \(G\) and \(G'\), respectively. First, partition \(P_{G'}\) into \(m = w + 2\) buckets, where \(w\) is a constant to be defined later in this paragraph. For the leftmost and rightmost buckets, find two points \(c_1\) and \(c_2\) such that \(\int_{-\infty}^{c_1} P_{G_1}(x) dx = \frac{\epsilon}{4}\) and \(\int_{c_2}^\infty P_{G_2}(x) dx = \frac{\epsilon}{4}\). Define \(G'\) to be 0 in these two buckets. Consequently, \(\|P_G - P_{G'}\|_{TV} \leq \frac{\epsilon}{4}\) for each of these buckets. Using Lemma \ref{gau_lip}, we know that Gaussian distributions are \(L\)-Lipschitz, i.e., \(|P_G(x_1) - P_G(x_2)| \leq L|x_1 - x_2|\) for any Gaussian distribution \(G\). For the region between \(c_1\) and \(c_2\), divide it into \(w = \frac{L(c_2 - c_1)^2}{\epsilon}\) buckets, each with a width of \(\frac{\epsilon}{L(c_2 - c_1)}\). Let \(l\) and \(r\) represent the left and right boundaries of a bucket. Define \(G'\) in each of these middle buckets as a uniform distribution such that: $\int_{l}^{r} P_G(x) dx = \int_{l}^{r} P_{G'}(x) dx - \frac{\epsilon}{2w}.$ The adjustment \(-\frac{\epsilon}{2w}\) ensures that \(\int_{-\infty}^\infty P_{G'}(x) dx = 1\). Within each middle bucket, we have \(P_G(r) - P_G(l) \leq \frac{\epsilon}{c_2 - c_1}\), leading to \(\|P_G - P_{G'}\|_{TV} \leq \frac{\epsilon(r - l)}{2(c_2 - c_1)} = \frac{\epsilon}{2w}\). Since there are \(w\) buckets in the middle, the total variation distance in this region satisfies \(\|P_G - P_{G'}\|_{TV} \leq \frac{\epsilon}{2}\). Combining the contributions from the leftmost, rightmost, and middle regions, we conclude that \(\|P_G - P_{G'}\|_{TV} \leq \epsilon\) for any Gaussian distribution \(G\) with a mean between 0 and 1.

Let the function \(F\) represent an \(n\)-round algorithm \(A\), where the input to \(F\) is the information the algorithm receives at each round. For any \(n\)-round algorithm \(A\), let \(P\) denote the distribution of \(F(G_1, \ldots, G_n, B)\) and \(P'\) denote the distribution of \(F(G_1', \ldots, G_n', B)\). Here, \(G_i\) is the distribution of the original reward in the \(i\)-th round, \(G_i'\) is the modified distribution in the \(i\)-th round following the modification described in the first paragraph, and \(B\) represents the randomness of the algorithm. Given that \(\|P_{G_i} - P_{G_i'}\|_{TV} \leq \epsilon\) for all \(i\), it follows that \(\|P - P'\|_{TV} \leq n\epsilon\). By setting \(\epsilon \leq \frac{1}{4n}\) and $\delta \in (0,\frac{1}{2})$,  the algorithm guarantees that \(f^*(\hat{\pi}) \geq f^*(\pi_*) - \alpha\) under \(P\) with probability greater than \(\frac{1}{2}\). Consequently, under \(P'\), this result holds with probability greater than \(\frac{1}{4}\).

Next, we define the distribution \(G''\) as follows: it first uniformly selects one of the \(w+2\) buckets and then uniformly selects a reward within the chosen bucket. Note that the conditional distribution within each bucket is identical between \(G'\) and \(G''\) (both are uniform distributions over a fixed region). We now demonstrate that if the learning algorithm \(A\) has finite query complexity \(n\), then there exists a distribution over arms such that for any function in the class, there is a non-zero probability of sampling a near-optimal arm. Specifically, let \(\mathcal{M}'\) denote the model class where the arms have reward distributions \(G'\), and let \(\mathcal{M}''\) denote the model class where the arms have reward distributions \(G''\). We execute the algorithm \(A\), but whenever \(A\) pulls an arm, we respond to its queries in each round using \(G''\).  $\forall M' \in \mathcal{M}'$, with probability $(w+2)^{-n}$, these all agree with rewards $M'$ would respond with, and independent of which rewards $M'$ would respond with. Let $p$ be the distribution over output of $\hat{\pi}$ by this execution, we have $\forall M^* \in \mathcal{M}_{\mathcal{F}}^{G,\sigma^2}$, $\mathbb{P}_{\pi \sim p}(\sup_{\pi^*}f^{M^*}(\pi^*)-f^{M^*}(\pi)\leq \alpha) \geq \frac{1}{4}(w+2)^{-n}>0$.

   Finally, we have $\gamma_{\mathcal{F},\alpha} \geq \mathbb{P}_{\pi \sim p}(\sup_{\pi^*}f^{M^*}(\pi^*)-f^{M^*}(\pi)\leq \alpha) \geq \frac{1}{4}(w+2)^{-n}>0$. Therefore, we have: $T \geq \Omega\left(\log\frac{1}{\gamma_{\mathcal{F},\alpha}}\right)$.
\end{proof}


\begin{corollary}
    Theorem \ref{upper_bound_unbounded} and Theorem \ref{gaussian_lower_bound} together imply $\gamma_{\mathcal{F},\alpha}>0\;\forall \alpha \in (0,1)$ is also a characterization of learnability for $\mathcal{F}$ with unbounded noise and $\mathcal{F}$ with Gaussian noise.
\end{corollary}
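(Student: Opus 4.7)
The plan is to observe that the two preceding theorems already sandwich the desired characterization in both noise regimes, once one notices a trivial inclusion between the model classes. Specifically, every Gaussian distribution with variance $\sigma^2$ is a real-valued distribution with variance $\sigma^2$, so $\mathcal{M}_{\mathcal{F}}^{G,\sigma^2} \subseteq \mathcal{M}_{\mathcal{F}}^{U,\sigma^2}$. From this inclusion, the upper bound established for the larger class transfers down to the smaller class, and the lower bound established for the smaller class transfers up to the larger class, so each half of the characterization becomes immediate in both settings.

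For the unbounded-noise case, sufficiency is exactly Theorem \ref{upper_bound_unbounded}. For necessity, I would suppose $\mathcal{F}$ is learnable with unbounded noise, which means there is an algorithm that, for every $\alpha,\delta \in (0,1)$, returns an $\alpha$-optimal arm with probability at least $1-\delta$ uniformly over $\mathcal{M}_{\mathcal{F}}^{U,\sigma^2}$. By the class inclusion above, the same algorithm succeeds uniformly over $\mathcal{M}_{\mathcal{F}}^{G,\sigma^2}$, so $\mathcal{F}$ is learnable with Gaussian noise, and Theorem \ref{gaussian_lower_bound} then forces $\gamma_{\mathcal{F},\alpha}>0$ for every $\alpha \in (0,1)$.

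For the Gaussian-noise case, necessity is exactly Theorem \ref{gaussian_lower_bound}. For sufficiency, I would assume $\gamma_{\mathcal{F},\alpha}>0$ for every $\alpha \in (0,1)$; then Theorem \ref{upper_bound_unbounded} guarantees that Algorithm \ref{unbounded_algorithm}, whose analysis only used the variance bound via Lemma \ref{median_of_mean}, identifies an $\alpha$-optimal arm with probability at least $1-\delta$ for every $M \in \mathcal{M}_{\mathcal{F}}^{U,\sigma^2}$, and in particular for every $M \in \mathcal{M}_{\mathcal{F}}^{G,\sigma^2}$. There is essentially no technical obstacle here: the whole content of the corollary is the observation that the upper bound proved on the larger class and the lower bound proved on the smaller class meet, and the only thing one must verify is the set containment $\mathcal{M}_{\mathcal{F}}^{G,\sigma^2} \subseteq \mathcal{M}_{\mathcal{F}}^{U,\sigma^2}$, which is immediate from the definitions.
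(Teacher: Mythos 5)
Your proposal is correct and matches the argument the paper intends (the corollary is stated without an explicit proof): the inclusion $\mathcal{M}_{\mathcal{F}}^{G,\sigma^2} \subseteq \mathcal{M}_{\mathcal{F}}^{U,\sigma^2}$ lets the upper bound of Theorem \ref{upper_bound_unbounded} descend to the Gaussian class and the lower bound of Theorem \ref{gaussian_lower_bound} ascend to the unbounded class, closing both characterizations. No gaps.
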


\begin{remark}
The proof of Theorem \ref{gaussian_lower_bound} for the Gaussian noise model relies on a unified discretization scheme used for the histogram approximation of distributions. As a result, this proof can be directly generalized to any noise distributions that satisfy this property. Examples include Poisson distribution, geometric distribution (after rescaling) and other most noise distributions with Lipschitz-continuous densities.
\end{remark}

\begin{corollary}
The following statements are equivalent:
\begin{itemize}
    \item $\gamma_{\mathcal{F},\alpha}>0\;\forall \alpha \in (0,1)$.
    \item $\mathcal{F}$ is learnable with arbitrary noise.
    \item $\mathcal{F}$ is learnable with binary noise.
    \item $\mathcal{F}$ is learnable with Gaussian noise.
\end{itemize}
\end{corollary}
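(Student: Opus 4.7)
The plan is to establish the four-way equivalence by chaining together the previously proved results rather than reproving anything from scratch. Concretely, I would prove (1)$\iff$(2), (1)$\iff$(3), and (1)$\iff$(4), from which the full equivalence follows by transitivity.

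First I would dispense with (1)$\iff$(2): this is precisely the content of Theorem \ref{main_learn}, obtained by combining Theorem \ref{upper_bound_learnability} and Theorem \ref{lower_bound_learnability}. For (1)$\iff$(3), I would observe that the upper bound direction is immediate because Algorithm \ref{upper_bound_algo} is analyzed assuming only that rewards lie in $[0,1]$; binary-valued rewards are a special case, so the sufficiency argument of Theorem \ref{upper_bound_learnability} goes through unchanged. The necessity direction is exactly what the remark following Theorem \ref{lower_bound_learnability} makes explicit: the lower bound construction responds to the learner with independent Bernoulli$(\tfrac{1}{2})$ rewards and compares against $M^*_B$, a binary-valued model with the same mean function as $M^*$; hence even learnability restricted to binary noise already forces $\gamma_{\mathcal{F},\alpha}>0$ for every $\alpha\in(0,1)$.

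For (1)$\iff$(4), the sufficiency direction follows from Theorem \ref{upper_bound_unbounded}: a Gaussian model with variance $\sigma^2$ is a particular instance of $\mathcal{M}_{\mathcal{F}}^{U,\sigma^2}$, so Algorithm \ref{unbounded_algorithm} with the median-of-means estimator succeeds under the positivity of $\gamma_{\mathcal{F},\alpha}$. The necessity direction is exactly Theorem \ref{gaussian_lower_bound}, whose histogram/total-variation construction shows that finite query complexity under Gaussian noise forces $\gamma_{\mathcal{F},\alpha}\geq \tfrac{1}{4}(w+2)^{-T}>0$.

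There is no genuine obstacle here; the corollary is essentially a bookkeeping statement. The only thing worth being careful about is the direction of the implications when quoting each theorem, in particular making sure that (i) the upper bounds are stated for the broadest noise family so they transfer to the narrower Gaussian and binary sub-cases, and (ii) the lower bound constructions used in Theorem \ref{lower_bound_learnability} and Theorem \ref{gaussian_lower_bound} produce models actually lying in $\mathcal{M}_{\mathcal{F}}$ with binary and Gaussian noise respectively, which they do by construction. With those observations in place, the three equivalences close the loop and yield the four-way equivalence stated in the corollary.
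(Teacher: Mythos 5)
Your proposal is correct and matches the paper's (implicit) argument exactly: the paper also obtains the corollary by chaining Theorem \ref{main_learn}, the remark after Theorem \ref{lower_bound_learnability} (binary-noise lower bound), and Theorems \ref{upper_bound_unbounded} and \ref{gaussian_lower_bound} for the unbounded/Gaussian cases. No gaps; the bookkeeping on implication directions is handled properly.
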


\section{The Variant of Decision Estimation Coefficient}
\label{dec}

\subsection{Discussion about existing Decision Estimation Coefficient}

Recent work \citep{foster2021statistical,foster2023tight} provide a well-known complexity measure for bandit learnability and even more general interactive decision making problems, which they termed as \emph{Decision Estimation Coefficient (DEC)}. Consider the DEC formulation from \cite{foster2023tight}:
\begin{equation}
\label{original_dec}
\textbf{dec}_{\varepsilon}(\mathcal{M})=\sup_{\overline{M} \in \text{co}(\mathcal{M})} \inf_{p,q \in \Delta(\Pi)} \sup_{M \in \mathcal{M}} \{\mathbb{E} _{\pi \sim p}\left[ f^M(\pi_M)-f^M(\pi) \right]|\mathbb{E} _{\pi \sim q} \left[ D _H^2(M(\pi), \overline{M}(\pi))\right] \leq \varepsilon^2\}
\end{equation}
where $D_H^2$ denotes the Hellinger distance and $\pi_M:=\arg\max_{\pi \in \Pi}f^M(\pi)$. For stochastic bandit problem, \cite{foster2023tight} gives an upper bound for the query complexity based on $\textbf{dec}_{\bar{\varepsilon}(T)}(\mathcal{M})$ for $\bar{\varepsilon}(T)=\tilde{\Theta}(\sqrt{\textbf{Est}_H(T)/T})$. Here, $\textbf{Est}_H(T)$ represents the complexity when performing online distribution estimation with the model class $\mathcal{M}$. 

In this work, we are considering arbitrary noise, which is a highly complex family of distributions. It is impossible to achieve non-trivial guarantee based on distribution estimation, hence the constraint in $\textbf{dec}_{\bar{\varepsilon}(T)}(\mathcal{M})$ does not rule out any function in the function class $\mathcal{F}$. Given this fact, consider function class $\mathcal{F}_1$, the set of all functions on two arms. It is easy to show $\textbf{dec}_{\bar{\varepsilon}(T)}(\mathcal{M}_{\mathcal{F}_1})=\frac{1}{2}$ since at best distribution $p$ in Eq.(\ref{original_dec}) is uniform on two arms. $\mathcal{F}_1$ is learnable since the arm number is finite. As a contrast, consider function class $\mathcal{F}_2=\{\frac{1}{2}\mathbb{I}_z,z \in \mathbb{R}\}$. Note that for any distribution $p$, there exists some arm $\pi$ where $p$ has 0 probability mass, and there is a model $M$ that has $f^M(\pi)=\frac{1}{2}$. Thus we have $\textbf{dec}_{\bar{\varepsilon}(T)}(\mathcal{M}_{\mathcal{F}_2})=\frac{1}{2}$. $\mathcal{F}_2$ is not learnable. There exists a learnable function class $\mathcal{F}_1$ and a non-learnable class $\mathcal{F}_2$ that $\textbf{dec}_{\bar{\varepsilon}(T)}(\mathcal{M})$ have the same value. This indicates it cannot characterize learnability in stochastic bandit with arbitrary noise. Even with specific (such as binary or Gaussian) noise, the complexity of online distribution estimation can still be large. In these cases, $\textbf{dec}_{\bar{\varepsilon}(T)}(\mathcal{M})$ is not made trivially vacuous by the richness of the noise model. However, it is still unclear whether it could characterize the learnability of stochastic bandit. 

\subsection{A new variant of Decision Estimation Coefficient}


In this section, inspired by our proposed \emph{generalized maximin volume}, we give a new variant of Decision Estimation Coefficient that can also characterize the learnability of stochastic noisy bandits. Our modification involves two key changes: first, we replace the Hellinger distance between models to square loss between functions. Since our focus is on arbitrary noise setting, we replace the online distribution estimation oracle to the online regression oracle, and update its corresponding guarantees accordingly. Second, we change the expectation $\mathbb{E}_{\pi \sim p}\left[\sup_{\pi^*} f(\pi^*)-f(\pi) \right]$ to probability $\mathbb{P}_{\pi \sim p}\left(\sup_{\pi^*} f(\pi^*)-f(\pi)> \alpha \right)$ in Eq.(\ref{original_dec}). This change is essential for ensuring learnability, similar to the role of \emph{generalized maximin volume}.


Our variant of the Decision Estimation Coefficient is inspired by Theorem \ref{adaptivity}, which demonstrates that certain function classes allow adaptive algorithms to achieve better query complexity than non-adaptive algorithms. This insight motivates us to propose a generic adaptive algorithm and integrate this information into  \emph{generalized maximin volume}. While the algorithm is not always optimal, it narrows the gap between upper and lower bounds for many function classes. This improvement is reflected in our analysis using the proposed variant of DEC. Specifically, when the function class \(\mathcal{F}\) admits efficient online regression, the constraint in DEC quickly diminishes to a small value over a short time horizon \(T\). As a result, DEC effectively becomes a localized version of the \emph{generalized maximin volume}.

Formally, we introduce our proposed variant of DEC: given $\alpha \in (0,1)$, let 
$$\left.\textbf{dec}_{\varepsilon,\alpha}(\mathcal{F},\overline{f})=\inf_{p,q \in \Delta(\Pi)} \sup_{f \in \mathcal{F}}\left\{\mathbb{P}_{\pi \sim p}\left(\sup_{\pi^*} f(\pi^*)-f(\pi)> \alpha \right)\right| \mathbb{E}_{\pi \sim q} \left[(f(\pi)-\overline{f}(\pi))^2\right] \leq \varepsilon^2\right\}$$
Further, let $\text{co}(\mathcal{F})$ denote the convex hull of function class $\mathcal{F}$, define
$$\textbf{dec}_{\varepsilon,\alpha}(\mathcal{F})=\sup_{\overline{f} \in \text{co}(\mathcal{F})} \textbf{dec}_{\varepsilon,\alpha}(\mathcal{F},\overline{f})$$
\begin{definition}[Online regression oracle for $\mathcal{F}$]
	At each time $t\in[T]$, an online regression oracle
        $\textbf{Reg}$ for $\mathcal{F}$ returns,
        given  $$\mathfrak{H}^{t-1}=(\pi^{1},r^{1}),...,(\pi^{t-1},r^{t-1})$$
        with $r^{i}\sim M^*(\pi^{i})$ and
        $\pi^{i}\sim p^{i}$, an estimator
        $\widehat{f}^{t}\in \text{co}(\mathcal{F})$ such that whenever $f^* \in \mathcal{F}$ (Equivalently, $M^* \in \mathcal{M}_{\mathcal{F}}$),
        \begin{align}
              \textbf{EST}(T):=
          \sum_{t=1}^{T}\mathbb{E}_{\pi^{t}\sim p^{t}}\left[(f^*(\pi^t)-\widehat{f}^{t}(\pi^t))^2\right] \leq \textbf{EST}(T,\delta)
        \end{align}
	with probability at least $1-\delta$, where $\textbf{EST}(T,\delta)$ is a
        known upper bound.
\end{definition}

Next, we will describe another characterization of learnability for stochastic bandits with arbitrary noise based on our variant of the Decision-Estimation-Coefficient. Let $T \in \mathbb{N}$, define $\overline{\textbf{EST}}:=\textbf{EST}\left( \frac{2T}{\lceil\log 4/\delta \rceil},
  \frac{\delta}{4\lceil \log 4/\delta\rceil}\right)$ and set $\overline{\varepsilon}(T):= 8\sqrt{\frac{\lceil \log 4/\delta \rceil}{T} \overline{\textbf{EST}}}$. Then we have:

\begin{theorem}
     $\mathcal{F}$ is learnable with arbitrary noise if and only if $\forall \alpha\! \in\! (0,\!1), \!\exists T\!\in\! \mathbb{N}, \textbf{dec}_{\overline{\varepsilon}(T),\alpha}(\mathcal{F}) \!<\! 1$.
\end{theorem}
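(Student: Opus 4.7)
The plan is to prove the two directions separately, using Theorem \ref{main_learn} as a pivot. The forward direction (learnable implies the DEC condition) is a direct substitution using the maximin-volume witness. The reverse direction (the DEC condition implies learnability) is algorithmic, constructed from the online regression oracle in the standard DEC template, followed by boosting across independent trials and Chernoff-based candidate selection at the end.

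For the forward direction: if $\mathcal{F}$ is learnable then Theorem \ref{main_learn} yields $\gamma_{\mathcal{F},\alpha}>0$, witnessed by some $p^*$ satisfying $\mathbb{P}_{\pi\sim p^*}(\sup_{\pi^*}f(\pi^*)-f(\pi)>\alpha)\le 1-\gamma_{\mathcal{F},\alpha}$ uniformly in $f\in\mathcal{F}$. Plugging $(p,q)=(p^*,p^*)$ into $\textbf{dec}_{\varepsilon,\alpha}(\mathcal{F},\overline{f})$ and bounding the constrained supremum by the unconstrained supremum over $\mathcal{F}$ gives $\textbf{dec}_{\varepsilon,\alpha}(\mathcal{F},\overline{f})\le 1-\gamma_{\mathcal{F},\alpha}<1$ for every $\overline{f}\in\mathrm{co}(\mathcal{F})$, every $\varepsilon$, and in particular every $T$.

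For the reverse direction, fix $\alpha,\delta$, choose $T$ so that $c:=1-\textbf{dec}_{\overline{\varepsilon}(T),\alpha/2}(\mathcal{F})>0$, and split the budget into $K:=\lceil\log(4/\delta)\rceil/2$ independent trials of $T':=2T/\lceil\log(4/\delta)\rceil$ rounds each. In one trial, run the oracle for $T'$ rounds at confidence $\delta':=\delta/(4\lceil\log(4/\delta)\rceil)$; at each step $t$, let $(p^t,q^t)$ nearly attain the infimum defining $\textbf{dec}_{\overline{\varepsilon}(T),\alpha/2}(\mathcal{F},\widehat{f}^t)$, and sample $\pi^t\sim q^t$ (this is the key alignment point: feeding the oracle with $q^t$-samples makes its $\overline{\textbf{EST}}$ guarantee match the $q$-constraint inside the DEC). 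A uniformly random index $t^*\in[T']$ is then ``good'' (the DEC constraint holds for $f^*$ at $t^*$) with probability at least $127/128$ on the oracle's success event, by Markov applied to the precise calibrations $\overline{\varepsilon}(T)^2=64\lceil\log(4/\delta)\rceil\overline{\textbf{EST}}/T$ and $T'=2T/\lceil\log(4/\delta)\rceil$. On the good event the DEC bound gives $\mathbb{P}_{\pi\sim p^{t^*}}(\sup_{\pi^*}f^*(\pi^*)-f^*(\pi)\le\alpha/2)\ge c$, so drawing $O(1/c)$ candidates from $p^{t^*}$ yields an $(\alpha/2)$-optimal arm with constant probability. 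Boosting over $K$ trials reduces the global failure to at most $\delta/2$, and a final Chernoff estimation phase over the $O(K/c)$ accumulated candidates (as in Algorithm \ref{upper_bound_algo}) identifies an $\alpha$-optimal arm with total failure probability at most $\delta$.

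The hard part will be the constant-chasing in the calibrations of $\overline{\textbf{EST}}$ and $\overline{\varepsilon}(T)$, confirming that a single trial succeeds with an absolute constant probability and that $K$ trials boost this to $1-\delta/2$ while keeping the total pull budget $O(T)$ plus the final Chernoff phase. Once these constants are in hand, everything else --- Markov for the good round, the DEC bound firing for $f^*$, and standard concentration for candidate selection --- is routine, and no new machinery beyond the DEC-with-oracle template is required.
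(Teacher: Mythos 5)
Your forward direction is exactly the paper's: learnability gives $\gamma_{\mathcal{F},\alpha}>0$ via Theorem \ref{main_learn} (equivalently Theorem \ref{lower_bound_learnability}), and since the square-loss constraint only shrinks the supremum over $f$, the DEC is at most $1-\gamma_{\mathcal{F},\alpha}<1$ for every $\overline{f}$, $\varepsilon$, and $T$. Your reverse direction follows the same oracle-plus-DEC template as Algorithm \ref{square_e2d} --- compute $(p^t,q^t)$ from the DEC infimum at $\widehat{f}^t$, sample from $q^t$ so the oracle's guarantee aligns with the $q$-constraint, and use Markov to show a uniformly random round puts $f^*$ in the constraint set (your $127/128$ calibration against the full threshold $\overline{\varepsilon}(T)^2$ is correct) --- but it diverges in how the "good round'' is exploited. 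The paper runs a \emph{single} exploration phase of length $J$, samples $L$ indices from it, and then \emph{certifies} a good index via a validation step: it re-runs the regression oracle on fresh draws from each $q^{t_\ell}$ to form $\widetilde{f}_\ell$, selects $\hat\ell$ minimizing $\mathbb{E}_{\pi\sim q^{t_\ell}}[(\widehat{f}^{t_\ell}-\widetilde{f}_\ell)^2]$, and pushes a chain of triangle inequalities through to conclude $f^*\in\mathcal{H}_{q^{t_{\hat\ell}},\overline{\varepsilon}(T)}(\widehat{f}^{t_{\hat\ell}})$, so that $m=\frac1\gamma\log\frac2\delta$ draws from the single certified $p^{t_{\hat\ell}}$ suffice. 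You instead repeat the whole exploration phase $K$ times independently and pool candidates from every trial into one final Chernoff phase, never certifying which trial was good. Your route avoids the validation machinery entirely (and the $/16$ slack it consumes), at the cost of re-running the oracle $K$ times and carrying $O(K)$ times more candidates into the estimation phase --- harmless for the learnability equivalence, which only needs finiteness.

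One calibration in your boosting does not close as stated: with $K=\lceil\log(4/\delta)\rceil/2$ trials and only $O(1/c)$ candidates drawn per trial, each trial succeeds with probability bounded by an absolute constant strictly below $1$ (the candidate draw itself fails with constant probability), so the overall failure is $(\Theta(1))^{K}=\delta^{\Theta(1)}$, which exceeds $\delta/2$ for small $\delta$. This is easily repaired --- draw $O(\frac{1}{c}\log\frac1\delta)$ candidates per trial (candidate draws cost no arm pulls, only the final estimation does), which drives the per-trial failure down to essentially the $1/128$ Markov term and makes $K$ trials overkill --- but as written the constants do not yield $1-\delta$.
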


\begin{algorithm}
	\caption{Learning algorithm based on DEC Variant}
\label{square_e2d}
	\KwIn{A number $T \in \mathbb{N}$.\\
        Failure probability $\delta>0$.\\
        Online regression oracle \textbf{Reg}.}
	 Define $L:= \lceil \log 4/\delta\rceil$, $J :=
       \frac{T}{L+1}$, and $\overline{\textbf{EST}}:=\textbf{EST}\left( \frac{2T}{\lceil\log 4/\delta \rceil},
  \frac{\delta}{4\lceil \log 4/\delta\rceil}\right)$.\\
   Let $\mathcal{H}_{p,\varepsilon}(\bar{f}):=\{f \in \mathcal{F}|\mathbb{E}_{\pi \in p}[(f(\pi)-\bar{f}(\pi))^2]\leq \varepsilon^2\}$. \\
  Set $\overline{\varepsilon}(T):= 8\sqrt{\frac{\lceil \log 4/\delta \rceil}{T}
  \overline{\textbf{EST}}}$.
  
/*exploration phase\\
  \For{$t=1, 2, \cdots, J$}{
  Obtain estimate $\widehat{f}^{t} = \textbf{Reg}\left( \{(\pi^i,r^i)\}_{i=1}^{t-1} \right).$\\
  Compute
    $$(p^t,q^t)=\mathop{\arg\min}_{p,q \in \Delta(\Pi)} \sup_{f \in \mathcal{H}_{q,\overline{\varepsilon}(T)}(\widehat{f}^t)}\mathbb{P}_{\pi \sim p}\left(\sup_{\pi^*}f(\pi^*)-f(\pi) > \frac{\alpha}{2}\right).$$
with the convention that the value is zero if $\mathcal{H}_{q,\overline{\varepsilon}(T)}(\widehat{f}^t)=\emptyset$.\\
    Sample decision $\pi^t \sim q^t$ and update regression
    oracle $\textbf{Reg}$ with $(\pi^t, r^t)$.
}

/* exploitation phase\\
Sample $L$ indices $t_1,...,t_L \sim \text{Unif}([J])$ independently.\\
For each $\ell \in [L]$, draw $J$ independent samples $\pi_\ell^1,...,\pi_\ell^J \sim q^{t_\ell}$, and observe $(\pi_\ell^j,r_\ell^j)$ for each $j \in [J]$.\\
For each $\ell \in [L]$ and $j \in [J]$, compute

$$\widetilde{f}_\ell^j:=\textbf{Reg}(\{(\pi_\ell^i,r_\ell^i)\}_{i=1}^{j-1}),$$

and let $\widetilde{f}_\ell:=\frac{1}{J}\sum_{j=1}^J \widetilde{f}_\ell^j$.

Set $\hat{p}:=p^{t_{\hat{\ell}}}$, where $\hat{\ell}:= \mathop{\arg\min}_{\ell \in [L]} \mathbb{E}_{\pi \sim q^{t_{\ell}}}[(\widehat{f}^{t_{\ell}}(\pi)-\widetilde{f}_\ell(\pi))^2]$.

  Let $\gamma= 1 - \sup_{f \in \mathcal{H}_{q,\overline{\varepsilon}(T)}(\widehat{f}^{t_{\hat\ell}})} \mathbb{P}_{\pi \sim \hat{p}}(\sup_{\pi^*} f(\pi^*)-f(\pi)> \frac{\alpha}{2})$.
    
    Sample $m=\frac{1}{\gamma}\log\frac{2}{\delta}$ arms from $\hat{p}: \pi_1,\pi_2,...,\pi_m$.
    
    For each $\pi_i$, query  $\frac{8}{\alpha^2}\log\frac{4m}{\delta}$ times. Let $\hat{f}(\pi_i)$ denote the empirical mean of arm $\pi_i$ in those rounds.

\KwOut{The arm $\hat{\pi}$ with largest empirical mean $\hat{f}(\pi_i)$.}

\end{algorithm}
\begin{theorem}[Upper Bound]
\label{dec_upper}
  $$\forall \alpha \in (0,1), \exists\, T \in \mathbb{N}, \textbf{dec}_{\overline{\varepsilon}(T),\alpha}(\mathcal{F}) < 1$$
  is a sufficient condition for learnability of function class $\mathcal{F}$ with arbitrary noise. In addition, $QC(\mathcal{F},\alpha,\delta)=O\left(T+\frac{8}{(1-\textbf{dec}_{\overline{\varepsilon}(T),\alpha/2}(\mathcal{F}))\alpha^2}\log\frac{2}{\delta}\log\left(\frac{4}{\delta (1-\textbf{dec}_{\overline{\varepsilon}(T),\alpha/2}(\mathcal{F}))}\log\frac{2}{\delta}\right)\right)$.
\end{theorem}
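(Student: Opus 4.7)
My plan is to analyze Algorithm \ref{square_e2d} phase by phase and then conclude via the DEC definition combined with the empirical-mean argument already used in the proof of Theorem \ref{upper_bound_learnability}. The target is to show that, with probability at least $1-\delta$, the final returned arm $\hat\pi$ is $\alpha$-optimal, while the total number of pulls is $T$ (from the two internal phases) plus the $m \cdot \frac{8}{\alpha^2}\log(4m/\delta)$ additional pulls used in the output stage, which matches the stated query complexity. Throughout, write $\gamma^\star := 1-\textbf{dec}_{\overline\varepsilon(T),\alpha/2}(\mathcal{F}) > 0$, which exists by hypothesis for the chosen $T$.

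First, I would control the exploration phase. The online regression oracle guarantee gives
$\sum_{t=1}^{J}\mathbb{E}_{\pi^t\sim q^t}[(f^*(\pi^t)-\widehat f^t(\pi^t))^2]\leq \textbf{EST}(J,\delta/(4L))$
with probability at least $1-\delta/(4L)$. Averaging over $t$ and applying Markov's inequality, a uniformly random index $t\in[J]$ has $\mathbb{E}_{\pi\sim q^t}[(f^*(\pi)-\widehat f^t(\pi))^2]\leq \overline\varepsilon(T)^2/64$ with constant probability. Drawing $L=\lceil\log(4/\delta)\rceil$ independent indices $t_1,\dots,t_L$ therefore ensures, with probability at least $1-\delta/4$, that some $\ell^\star\in[L]$ is a ``good'' index in this sense. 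The conceptual point is that on this $\ell^\star$, $f^*\in\mathcal{H}_{q^{t_{\ell^\star}},\overline\varepsilon(T)/8}(\widehat f^{t_{\ell^\star}})$, but the algorithm does not know which $\ell$ this is.

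Second, and this is the main obstacle, I need to argue that the data-driven selection rule $\hat\ell:=\arg\min_\ell \mathbb{E}_{\pi\sim q^{t_\ell}}[(\widehat f^{t_\ell}(\pi)-\widetilde f_\ell(\pi))^2]$ picks an index whose $\widehat f^{t_{\hat\ell}}$ is close to $f^*$ on $q^{t_{\hat\ell}}$. The reference estimator $\widetilde f_\ell=\frac{1}{J}\sum_{j=1}^J \widetilde f_\ell^j$ is built from fresh i.i.d.\ samples drawn from $q^{t_\ell}$, so invoking the regression oracle guarantee on each batch (with failure budget $\delta/(4L)$ per batch) and applying Jensen's inequality yields $\mathbb{E}_{\pi\sim q^{t_\ell}}[(f^*(\pi)-\widetilde f_\ell(\pi))^2]\leq \overline\varepsilon(T)^2/64$ simultaneously for all $\ell\in[L]$, with probability at least $1-\delta/4$. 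A triangle-inequality style argument in the $L^2(q^{t_\ell})$ norm then shows that (i) on the good index $\ell^\star$, the minimized quantity is small, and (ii) if a bad $\ell$ is selected, the $L^2(q^{t_\ell})$ distance between $\widehat f^{t_\ell}$ and $f^*$ is at most a constant times $\sqrt{\mathbb{E}_{\pi\sim q^{t_\ell}}[(\widehat f^{t_\ell}-\widetilde f_\ell)^2]}+\overline\varepsilon(T)/8$. Combining these (the constants in the definition of $\overline\varepsilon(T)$ are set precisely so this closes) gives $f^*\in\mathcal{H}_{q^{t_{\hat\ell}},\overline\varepsilon(T)}(\widehat f^{t_{\hat\ell}})$ with probability at least $1-\delta/2$.

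Finally, by the definition of $\textbf{dec}_{\overline\varepsilon(T),\alpha/2}(\mathcal{F})$ applied with $\overline f=\widehat f^{t_{\hat\ell}}\in\text{co}(\mathcal{F})$ and the pair $(p^{t_{\hat\ell}},q^{t_{\hat\ell}})$ that minimizes the inner supremum in the algorithm, the constructed $\hat p$ satisfies $\mathbb{P}_{\pi\sim\hat p}(\sup_{\pi^*}f^*(\pi^*)-f^*(\pi)\leq \alpha/2)\geq \gamma^\star$. At this point the proof reduces to the argument already given for Theorem \ref{upper_bound_learnability}: drawing $m=\frac{1}{\gamma^\star}\log(2/\delta)$ arms from $\hat p$ ensures with probability at least $1-\delta/4$ that at least one sampled arm $\pi_i$ is $\alpha/2$-optimal, and Hoeffding/Chernoff on $\frac{8}{\alpha^2}\log(4m/\delta)$ queries per $\pi_i$ plus a union bound implies $\hat\pi$ is $\alpha$-optimal with probability at least $1-\delta/4$. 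Collecting all the failure events yields the advertised $1-\delta$ correctness, and summing $T$ (exploration plus exploitation) with $m\cdot \frac{8}{\alpha^2}\log(4m/\delta)$ gives the stated query complexity. The hardest part of the argument is clearly the second step: correctly tuning constants so that the data-driven minimization of $\mathbb{E}_{\pi\sim q^{t_\ell}}[(\widehat f^{t_\ell}-\widetilde f_\ell)^2]$ is guaranteed to land inside the DEC constraint set $\mathcal{H}_{q^{t_{\hat\ell}},\overline\varepsilon(T)}(\widehat f^{t_{\hat\ell}})$.
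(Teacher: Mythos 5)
Your proposal follows essentially the same route as the paper's proof: control the exploration phase via the regression-oracle guarantee and Markov's inequality, boost over $L$ sampled indices, use the fresh-sample validation estimators $\widetilde{f}_\ell$ with Jensen's inequality and two triangle inequalities to show the argmin selection $\hat\ell$ places $f^*$ inside $\mathcal{H}_{q^{t_{\hat\ell}},\overline{\varepsilon}(T)}(\widehat{f}^{t_{\hat\ell}})$, then invoke the DEC definition to get $\gamma \geq 1-\textbf{dec}_{\overline{\varepsilon}(T),\alpha/2}(\mathcal{F})$ and finish with the sampling-plus-Chernoff argument from Theorem \ref{upper_bound_learnability}. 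The only differences are immaterial constant choices (e.g., $\overline{\varepsilon}(T)^2/64$ versus the paper's $\overline{\varepsilon}(T)^2/16$ after Markov) and slightly different failure-probability bookkeeping, both of which you correctly flag as tunable and which the paper's choice of $\overline{\varepsilon}(T)=8\sqrt{(L/T)\overline{\textbf{EST}}}$ indeed makes close.
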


\begin{proof} We begin by analyzing the exploitation phase. Recall that we set $J := \frac{T}{\lceil \log 4/\delta \rceil+1} \geq \frac{T}{2L}$. We have that with probability at
  least $1-\frac{\delta}{4L}$, 
  \begin{align}
\sum_{t=1}^{J} \mathbb{E}_{\pi^t \sim q^t} \left[\left(f^*(\pi^t)-\widehat{f}^t(\pi^t)\right)^2\right] \leq \textbf{EST}\left(J, \frac{\delta}{4L}\right) \leq \overline{\textbf{EST}}\nonumber.
  \end{align}
We denote this event by $\mathscr{E}_0$, and condition on it going
forward. Since we have $\overline{\varepsilon}(T)^2 \geq  \frac{32}{J}\overline{\textbf{EST}}$ by definition, it follows from Markov's inequality that if $s \in [J]$ is chosen uniformly at random, then with probability at least $1/2$,
  \begin{align}
\mathbb{E}_{\pi^s \sim q^s} \left[\left(f^*(\pi^s)-\widehat{f}^t(\pi^s)\right)^2\right] \leq \frac{\overline{\varepsilon}(T)^2}{16}\label{eq:s-hell-bound-2}.
  \end{align}
Going forward, our aim is to show that the exploitation phase
identifies such an index $s\in [J]$. Indeed, for any $s\in[J]$
such that the inequality (\ref{eq:s-hell-bound-2}) holds, we have
$f^* \in\mathcal{H}_{q^{s},\overline{\varepsilon}(T)}(\widehat{f}^{s})$, and hence
\[
\mathbb{P}_{\pi \sim p^{s}}\left(\sup_{\pi^*}f^*(\pi^*)-\widehat{f}^s(\pi) > \frac{\alpha}{2} \right) \leq \textbf{dec}_{\overline{\varepsilon}(T),\alpha/2}(\mathcal{F},\widehat{f}^s) \leq  \textbf{dec}_{\overline{\varepsilon}(T),\alpha/2}(\mathcal{F})
\]

To proceed, first observe that for the uniformly sampled indices $t_1,
\ldots, t_L \in [J]$, a standard confidence boosting argument implies
that with probability at least $1-2^{-L} \geq 1-\frac{\delta}{4}$,
there is some $\ell \in [L]$ so that \eqref{eq:s-hell-bound-2} is
satisfied with $s = t_{\ell}$. We denote this event by $\mathscr{F}$.

Next, recall the definition $\widetilde{f}_\ell = \frac{1}{J} \sum_{j=1}^J
\widetilde{f}_\ell^j$. We have that for each $\ell \in [L]$,
there is an event that occurs with probability at least
$1-\frac{\delta}{4L}$, denoted by
$\mathscr{E}_\ell$, such that under $\mathscr{E}_\ell$ we have
\begin{equation}
    \begin{aligned}
        \mathbb{E}_{\pi \sim q^{t_\ell}}\left[ (f^*(\pi)-\widetilde{f}_{\ell}(\pi))^2 \right] 
        &=  \mathbb{E}_{\pi \sim q^{t_\ell}}\left[ \left(f^*(\pi)- \frac{1}{J} \sum_{t=1}^{J} \widetilde{f}^t_{\ell}(\pi)\right)^2 \right] \\
        &\leq \frac{1}{J} \sum_{t=1}^{J}\mathbb{E}_{\pi \sim q^{t_\ell}}\left[ \left(f^*(\pi)-\widetilde{f}_{\ell}^t(\pi)\right)^2 \right]\\
        &\leq \frac{\textbf{EST}(J,\delta/4L)}{J}\\
        &\leq \frac{\overline{\textbf{EST}}}{J}\\
        &\leq \frac{\overline{\varepsilon}(T)^2}{32}
    \end{aligned}
\end{equation}
   We define $\mathscr{E} := \mathscr{F}\cap  \bigcap_{\ell=0}^L \mathscr{E}_\ell$, so that $\mathscr{E}$ occurs with probability at least $1-\frac{(L+1)\delta}{4L} - \frac{\delta}{4} \geq 1-\frac{\delta}{2}$.
  
We now show that the exploitation phase succeeds whenever the event
$\mathscr{E}$ holds. By the triangle inequality for square loss,
letting $\ell\in [L]$ be any index such that \eqref{eq:s-hell-bound-2} is
satisfied with $s=t_\ell$, we have
\begin{equation}
    \begin{aligned}
        \mathbb{E}_{\pi \sim q^{t_\ell}}\left[  \left(\widetilde{f}_\ell(\pi)-\widehat{f}^{t_\ell}(\pi)\right)^2 \right]
        \leq 2\left( \mathbb{E}_{\pi \sim q^{t_\ell}}\left[  \left(f^*(\pi)-\widetilde{f}_{\ell}(\pi)\right)^2+ \left(f^*(\pi)-\widehat{f}^{t_\ell}(\pi)\right)^2 \right] \right)
        \leq \frac{\overline{\varepsilon}(T)^2}{4}\\
    \end{aligned}
\end{equation}

In addition, based on the definition of $\hat{\ell}$,
\begin{equation}
    \begin{aligned}
    \mathbb{E}_{\pi \sim q^{t_{\hat{\ell}}}}\left[  \left(\widetilde{f}_{\hat{\ell}}(\pi)-\widehat{f}^{t_{\hat{\ell}}}(\pi)\right)^2 \right]
        &\leq \mathbb{E}_{\pi \sim q^{t_\ell}}\left[  \left(\widetilde{f}_\ell(\pi)-\widehat{f}^{t_\ell}(\pi)\right)^2 \right]
        \leq \frac{\overline{\varepsilon}(T)^2}{4}
    \end{aligned}
\end{equation}

Using triangle inequality again, we obtain under the event $\mathscr{E}$,
\begin{equation}
    \begin{aligned}
        \mathbb{E}_{\pi \sim q^{t_{\hat{\ell}}}}\left[  \left(f^*(\pi)-\widehat{f}^{t_{\hat{\ell}}}(\pi)\right)^2 \right]
        \leq 2 \left( \frac{\overline{\varepsilon}(T)^2}{4}+ \frac{\overline{\varepsilon}(T)^2}{32} \right)
        \leq \overline{\varepsilon}(T)^2
    \end{aligned}
\end{equation}

This means that  $f^* \in\mathcal{H}_{q^{t_{\hat{\ell}}},\overline{\varepsilon}(T)}(\widehat{f}^{t_{\hat{\ell}}})$. In addition,
\begin{equation}
    \begin{aligned}
        \sup_{f \in \mathcal{H}_{q^{t_{\hat{\ell}}},\overline{\varepsilon}(T)}(\widehat{f}^{t_{\hat{\ell}}})} \mathbb{P}_{\pi \sim p^{t_{\hat{\ell}}}}\left(f(\pi_{M})-f(\pi) > \frac{\alpha}{2}\right)
        &= \inf_{p,q \in \Delta(\Pi)} \sup_{f \in\mathcal{H}_{q,\overline{\varepsilon}(T)}(\widehat{f}^{t_{\hat{\ell}}})} \mathbb{P}_{\pi \sim p}\left(f(\pi_{M})-f(\pi)> \frac{\alpha}{2}\right)\\
        &=\textbf{dec}_{\overline{\varepsilon}(T),\frac{\alpha}{2}}(\mathcal{F},\widehat{f}^{t_{\hat{\ell}}})\\
        &\leq \sup_{\overline{f} \in \text{co}(\mathcal{F})}\textbf{dec}_{\overline{\varepsilon}(T),\frac{\alpha}{2}}(\mathcal{F},\overline{f})\\
        &=\textbf{dec}_{\overline{\varepsilon}(T),\frac{\alpha}{2}}(\mathcal{F})
    \end{aligned}
\end{equation}
Therefore, with probability $1-\frac{\delta}{2}$, we have $f^*$ is contained in $\mathcal{H}_{q^{t_{\hat{\ell}}},\overline{\varepsilon}(T)}(\widehat{f}^{t_{\hat{\ell}}})$ and $\gamma \geq 1-\textbf{dec}_{\overline{\varepsilon}(T),\frac{\alpha}{2}}(\mathcal{F})>0$. When we sample $\pi$ from distribution $\hat{p}$ $m=\frac{1}{\gamma}\log(\frac{2}{\delta})$ times, we have:
\begin{equation}
    \begin{aligned}
    &\mathbb{P}\left(\exists \pi_i: \sup_{\pi^*}f^*(\pi^*)-f^*(\pi_i) \leq \alpha/2 \right)\\
    = &1-\mathbb{P}\left(\forall \pi_i: \sup_{\pi^*} f^*(\pi^*)-f^*(\pi_i) > \alpha/2\right)\\
    \geq &1-(1-\gamma)^{\frac{1}{\gamma}\log(\frac{2}{\delta})}\\
    \geq & 1-e^{-\log(\frac{2}{\delta})}\\
    =&1-\frac{\delta}{2}\\
    \end{aligned}
\end{equation}
Therefore, with probability at least $1-\frac{\delta}{2}$, $\exists \pi_i, \sup_{\pi^*} f^*(\pi^*)-f^*(\pi_i) \leq \frac{\alpha}{2}$. Then, based on Lemma \ref{chernoff}, let $\hat{f}(\pi_i)$ be the empirical mean of $\pi_i$ in $\frac{8}{\alpha^2}\log\frac{4m}{\delta}$ rounds,
$$\mathbb{P}\left[\left|f^*(\pi_i)-\hat{f}^*(\pi_i)\right|\geq \frac{\alpha}{4}\right]\leq 2e^{-2\frac{8}{\alpha^2}\log\frac{4m}{\delta}\frac{\alpha^2}{16}}=\frac{\delta}{2m}$$

Then, based on union bound, with probability $1-\frac{\delta}{2}$, for all arms $\pi_i$, its empirical mean is within $\frac{\alpha}{4}$ from their true mean. Then in total, with probability $1-\delta$, the arm $\hat{\pi}$ with largest estimated mean has the property $\sup_{\pi^*}f(\pi^*)-\hat{f}(\hat{\pi}) \leq \frac{3}{4}\alpha$. Therefore, for arm $\hat{\pi}$, $\sup_{\pi^*}f(\pi^*)-f(\hat{\pi}) \leq \alpha$ as desired.
  
  \end{proof}

  \begin{theorem}[Lower Bound]
$$\forall \alpha \in (0,1), \exists\, T \in \mathbb{N}, \textbf{dec}_{\overline{\varepsilon}(T),\alpha}(\mathcal{F}) < 1$$
  is a necessary condition for learnability of function class $\mathcal{F}$ with arbitrary noise.
  \end{theorem}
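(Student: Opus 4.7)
The plan is to bypass the complicated machinery of the DEC entirely and reduce directly to the learnability characterization already established in Theorem \ref{main_learn}. That theorem states learnability with arbitrary noise is equivalent to $\gamma_{\mathcal{F},\alpha}>0$ for all $\alpha\in(0,1)$, so it suffices to show: if $\gamma_{\mathcal{F},\alpha}>0$ for all $\alpha$, then for every $\alpha\in(0,1)$ and every $T\in\mathbb{N}$ (in particular $T=1$), $\textbf{dec}_{\overline{\varepsilon}(T),\alpha}(\mathcal{F})<1$.

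The execution is short. Fix $\alpha\in(0,1)$. Since the supremum in the definition of $\gamma_{\mathcal{F},\alpha}$ may fail to be attained, choose a distribution $p^{\star}\in\Delta(\Pi)$ satisfying
\begin{equation*}
\inf_{f\in\mathcal{F}}\mathbb{P}_{\pi\sim p^{\star}}\!\left(\sup_{\pi^{*}}f(\pi^{*})-f(\pi)\le\alpha\right)\;\ge\;\tfrac{1}{2}\gamma_{\mathcal{F},\alpha}.
\end{equation*}
Equivalently, $\sup_{f\in\mathcal{F}}\mathbb{P}_{\pi\sim p^{\star}}\!\left(\sup_{\pi^{*}}f(\pi^{*})-f(\pi)>\alpha\right)\le 1-\tfrac{1}{2}\gamma_{\mathcal{F},\alpha}<1$. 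Now fix any $\overline{f}\in\mathrm{co}(\mathcal{F})$, any $T\in\mathbb{N}$, and consider the infimum in $\textbf{dec}_{\overline{\varepsilon}(T),\alpha}(\mathcal{F},\overline{f})$. Plug in $p=p^{\star}$ together with an arbitrary $q\in\Delta(\Pi)$. Since restricting the supremum to the Hellinger-like ball $\{f:\mathbb{E}_{\pi\sim q}[(f(\pi)-\overline{f}(\pi))^{2}]\le\overline{\varepsilon}(T)^{2}\}$ only shrinks it,
\begin{equation*}
\textbf{dec}_{\overline{\varepsilon}(T),\alpha}(\mathcal{F},\overline{f})\;\le\;\sup_{f\in\mathcal{F}}\mathbb{P}_{\pi\sim p^{\star}}\!\left(\sup_{\pi^{*}}f(\pi^{*})-f(\pi)>\alpha\right)\;\le\;1-\tfrac{1}{2}\gamma_{\mathcal{F},\alpha}.
\end{equation*}
Taking the supremum over $\overline{f}\in\mathrm{co}(\mathcal{F})$ preserves this uniform bound, giving $\textbf{dec}_{\overline{\varepsilon}(T),\alpha}(\mathcal{F})\le 1-\tfrac{1}{2}\gamma_{\mathcal{F},\alpha}<1$ for every $T$, which is the required conclusion.

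There is essentially no obstacle: the key observation is that a single distribution $p^{\star}$ handles every $\overline{f}$, every constraint radius, and every time horizon simultaneously, because the constraint on $f$ in the DEC only makes the inner supremum smaller. The only minor subtlety is the possible non-attainment of $\gamma_{\mathcal{F},\alpha}$, which is dispatched by the factor $1/2$ above. Note this argument is strictly stronger than what the statement asks for (the "$\exists\, T$" can be replaced by "$\forall\, T$"), mirroring how Theorem \ref{main_learn} supplies the necessary condition in the non-adaptive, noise-level-free form.
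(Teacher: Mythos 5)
Your proof is correct and follows essentially the same route as the paper's: both reduce to the already-established necessity of $\gamma_{\mathcal{F},\alpha}>0$ (the paper cites Theorem \ref{lower_bound_learnability}, you cite Theorem \ref{main_learn}, which rests on the same result), rewrite this as $\inf_{p}\sup_{f}\mathbb{P}_{\pi\sim p}(\sup_{\pi^*}f(\pi^*)-f(\pi)>\alpha)<1$, and observe that the square-loss constraint in the DEC can only shrink the inner supremum, uniformly over $\overline{f}$, $q$, and $T$. Your explicit near-attaining $p^{\star}$ with the factor $1/2$ is a harmless way of making the non-attainment point precise; the conclusion and its strengthening to all $T$ match the paper.
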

  \begin{proof}
      For any learnable function class $\mathcal{F}$, we have $\sup_{p \in \Delta(\Pi)} \inf_{f \in \mathcal{F}} \mathbb{P}_{\pi \sim p}(\sup_{\pi^*}f(\pi^*)-f(\pi) \leq \alpha)>0\quad\forall \alpha$ by Theorem \ref{lower_bound_learnability}. Equivalently, we have $\inf_{p \in \Delta(\Pi)} \sup_{f \in \mathcal{F}} \mathbb{P}_{\pi \sim p}(\sup_{\pi^*}f(\pi^*)-f(\pi) > \alpha)<1\quad\forall \alpha$. Following this, we have: $\sup_{\overline{f} \in \text{co}(\mathcal{F})} \inf_{p,q \in \Delta(\Pi)} \sup_{f \in \mathcal{F}} \{\mathbb{P}_{\pi \sim p}(\sup_{\pi^*}f(\pi^*)-f(\pi) > \alpha)|\mathbb{E}_{\pi \sim q} [(f(\pi)-\overline{f}(\pi))^2] \leq \varepsilon^2\}<1 \, \forall \alpha \forall \varepsilon$, since any constraint would only make the maximum value smaller, which will be still smaller than 1.
  \end{proof}

At last, we wrap up this section by giving an illustrating example to show how our proposed DEC could improve query complexity in some cases compared to \emph{generalized maximin valume}.
  \begin{example}
  Consider the function class \(\mathcal{F}\) constructed in Theorem \ref{achivability} and set \(N = 1\). For clarity, we denote \(\gamma_{\mathcal{F},\alpha}\) as \(\gamma\) and \(\textbf{dec}_{\overline{\varepsilon}(T),\alpha}(\mathcal{F})\) as \(\textbf{dec}\). We claim that the upper bound on \(\textbf{dec}\) in Theorem \ref{dec_upper} achieves a near-optimal query complexity of \(\polylog\left(\frac{1}{\gamma}\right)\). 

First, for any function \(f \in \mathcal{F}\), there are \(\log\left(\frac{1}{\gamma}\right)\) non-zero mean values, which are \(\frac{1}{3}\), \(\frac{2}{3}\) or \(1\). Observing at most 
$O\left(\log\left(\frac{\log(\frac{1}{\gamma})}{\delta}\right)\right)$ samples for each value suffices to deduce the correct outcome. This implies that \(\textbf{EST}(T, \delta) = \polylog(\frac{1}{\gamma})\). Consequently, $\bar{\varepsilon}(T)=\tilde{O}\left(\sqrt{\frac{\log(\frac{1}{\gamma})}{T}}\right)$.

Next, we bound \(\textbf{dec}\) for a sufficiently small \(\varepsilon = \frac{1}{10}\). If \(\bar{f} \in \mathcal{F}\), assigning \(p\) as a single point mass on the arm with reward \(1\) ensures \(\textbf{dec} = 0\). For \(\bar{f} \in \text{co}(\mathcal{F}) \setminus \mathcal{F}\), we define \(\textbf{dec}\) for any fixed \(\bar{f}\) as \(0\) if there exists \(q\) such that the constraint is infeasible for all \(f \in \mathcal{F}\). Otherwise, if every \(q\) has a non-empty set of functions in \(\mathcal{F}\) \(\varepsilon\)-close to \(\bar{f}\) under \(q\), then \(\bar{f}\) is very close to some \(f \in \mathcal{F}\). In this case, there exists a leaf with near \(1\) value where \(p\) can place a single point mass, ensuring \(\textbf{dec} = 0\).


To justify this claim for \(\bar{f} \in \text{co}(\mathcal{F}) \setminus \mathcal{F}\), note that at the root level, if \(\bar{f}\) is not close to \(\frac{1}{3}\) or \(\frac{2}{3}\), then a distribution \(q\) with a single point mass at the root makes the constraint set infeasible. Therefore, \(\bar{f}\) must be close to \(\frac{1}{3}\) or \(\frac{2}{3}\) at the root. Without loss of generality, assume \(\bar{f} = \frac{1}{3}\) at the root. If \(\bar{f}\) is not close to \(\frac{1}{3}\) or \(\frac{2}{3}\) at the left child, then \(q\) with point masses at the root and left child will make the constraint infeasible. Proceeding recursively, \(\bar{f}\) must follow the appropriate \(\frac{1}{3}\) or \(\frac{2}{3}\) values along some branch in the tree (close to \(\frac{1}{3}\) when going left, \(\frac{2}{3}\) when going right). If \(\bar{f}\) is not close to \(1\) at the corresponding leaf, then point masses at the leaf's parent and the leaf render the constraint set infeasible. Otherwise, \(\bar{f}\) is close to \(1\) at some leaf, in which case a single point mass at that leaf ensures the constraint set contains only the function corresponding to the leaf, yielding \(\textbf{dec} = 0\).

  \end{example}

  \section{Conclusions and Future Directions}

In this work, we have given a complete characterization of learnability for stochastic noisy bandits and further explored the range of possible values of the optimal query complexity. One interesting direction for future work could be extending these results to other variants of learning problem with bandit feedback, such as 
contextual bandits \citep{langford2007epoch,li2010contextual,slivkins2011contextual,agarwal2014taming,agrawal2016linear,krishnamurthy2020contextual,foster2020beyond,foster2021efficient}, dueling bandits \citep{yue2009interactively}, combinatorial bandits \citep{cesa2012combinatorial,chen2013combinatorial,combes2015combinatorial,chen2016combinatorial}.

\bibliography{ref}

\appendix


\section{Auxiliary Lemma}

\begin{lemma}
\label{chernoff}
    Let $X_1,...,X_n$ be independent random variable such that $0 \leq X_i \leq 1$ almost surely. Let $\bar{Z}= \frac{X_1+...+X_n}{n}$. Then:
    $$\mathbb{P}(|\bar{Z}-\mathbb{E}[\bar{Z}]| \geq t) \leq 2e^{-2nt^2}$$
    Equivalently, $\forall \delta \in (0,1)$, with probability at least $1-\delta$,
    $$|\bar{Z}-\mathbb{E}[\bar{Z}]|\leq \sqrt{\frac{1}{2n}\ln\left(\frac{2}{\delta}\right)}.$$
\end{lemma}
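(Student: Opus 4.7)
The plan is to establish this via the classical Chernoff--Hoeffding argument: pass to exponential moments, use independence to factor the moment generating function, invoke Hoeffding's lemma on each bounded summand, and optimize the Markov parameter.

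First, I would center the variables by defining $Y_i := X_i - \mathbb{E}[X_i]$, so that each $Y_i$ is mean zero and supported on an interval of length at most $1$ (since $X_i \in [0,1]$). Applying Markov's inequality to the exponentiated sum $n(\bar{Z} - \mathbb{E}[\bar{Z}]) = \sum_{i=1}^n Y_i$ gives, for any $\lambda > 0$,
\[\mathbb{P}\bigl(\bar{Z} - \mathbb{E}[\bar{Z}] \geq t\bigr) \;\leq\; e^{-\lambda n t} \prod_{i=1}^n \mathbb{E}[e^{\lambda Y_i}],\]
where independence of the $X_i$ (hence of the $Y_i$) is used to factor the joint moment generating function.

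Next I would apply Hoeffding's lemma: any zero-mean random variable $Y$ supported on an interval of length $L$ satisfies $\mathbb{E}[e^{\lambda Y}] \leq e^{\lambda^2 L^2/8}$. With $L \leq 1$ for each $Y_i$ this yields $\mathbb{P}(\bar{Z} - \mathbb{E}[\bar{Z}] \geq t) \leq e^{-\lambda n t + n \lambda^2/8}$. Optimizing the right-hand side over $\lambda > 0$ (the minimizer is $\lambda = 4t$) produces the one-sided bound $e^{-2nt^2}$. Running the same argument with $-Y_i$ controls the lower tail, and a union bound gives the two-sided conclusion $\mathbb{P}(|\bar{Z} - \mathbb{E}[\bar{Z}]| \geq t) \leq 2 e^{-2nt^2}$. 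The equivalent confidence form then follows by setting $2 e^{-2nt^2} = \delta$ and solving for $t$.

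The only nontrivial ingredient is Hoeffding's lemma itself, whose standard proof bounds $e^{\lambda y}$ on $[a,b]$ by its affine chord using convexity, takes expectations to eliminate $Y$, and then controls the resulting function of $\lambda$ via a second-order Taylor expansion. Since this is a well-known inequality, I would cite it rather than reproduce the moment generating function calculation; no part of the argument presents a genuine obstacle, since the lemma is invoked purely as a standard auxiliary tool for the concentration steps used elsewhere in the paper.
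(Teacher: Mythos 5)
Your proposal is a correct and complete derivation: the Chernoff--Hoeffding argument (centering, Markov on the exponentiated sum, Hoeffding's lemma with interval length $1$, optimizing at $\lambda = 4t$ to get $e^{-2nt^2}$, union bound for two sides, and solving $2e^{-2nt^2}=\delta$ for the confidence form) is exactly the standard proof of this inequality. The paper itself states this lemma without proof as a known auxiliary fact (it is Hoeffding's inequality), so there is nothing to compare against; your argument is the canonical one and every step checks out.
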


\begin{lemma}
\label{median_of_mean}
Assume that the sample size $n=KB$, where $K$ is the number of
subsamples and $B$ is the size of each subsample. We first randomly split the data into $K$ subsample and
compute the mean using each subsample, which leads to a set of estimators. Each estimator is based on $B$ observations. The median-of-means estimator is the median of all these estimators. For any distribution $D$ with mean $\mu$ and standard deviation $\sigma$, the median-of-means estimate $\kappa$, on input $n$ samples, satisfies
    $$\mathbb{P}\left(|\kappa-\mu|>c_{M}\sigma\sqrt{\frac{\log\frac{1}{\delta}}{n}}\right)\leq \delta,$$
where $c_{M}$ is an absolute constant.
\end{lemma}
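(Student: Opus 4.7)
The plan is to establish Lemma \ref{median_of_mean} by combining Chebyshev's inequality on each bucket mean with a Hoeffding-type bound on how many buckets fail. Throughout, let $\bar{X}_1,\ldots,\bar{X}_K$ denote the $K$ bucket means, each formed from $B$ i.i.d.\ samples from $D$, so that $\mathbb{E}[\bar{X}_i]=\mu$ and $\mathrm{Var}(\bar{X}_i)=\sigma^2/B$. The median-of-means estimate is $\kappa = \mathrm{median}(\bar{X}_1,\ldots,\bar{X}_K)$.

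First, I would apply Chebyshev's inequality to a single bucket mean: for any $t>0$,
\[
\mathbb{P}\bigl(|\bar{X}_i-\mu|>t\bigr) \leq \frac{\sigma^2}{Bt^2}.
\]
Choose the threshold $t = 2\sigma/\sqrt{B}$, which makes the right-hand side at most $1/4$. Define the failure indicators $Z_i := \mathbb{1}\{|\bar{X}_i - \mu| > t\}$, so the $Z_i$ are independent with $\mathbb{E}[Z_i] \leq 1/4$.

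Second, I would observe the standard median-to-majority reduction: if $|\kappa-\mu|>t$, then more than half of the $\bar{X}_i$ must lie outside the interval $(\mu-t,\mu+t)$, i.e.\ $\sum_{i=1}^K Z_i \geq K/2$. Applying Hoeffding's inequality to the bounded independent variables $Z_i - \mathbb{E}[Z_i]$ gives
\[
\mathbb{P}\!\left(\sum_{i=1}^K Z_i \geq \tfrac{K}{2}\right) \leq \mathbb{P}\!\left(\sum_{i=1}^K(Z_i - \mathbb{E}[Z_i]) \geq \tfrac{K}{4}\right) \leq \exp(-K/8).
\]
Setting this at most $\delta$ requires $K \geq 8\log(1/\delta)$; taking $K = \lceil 8\log(1/\delta)\rceil$ (which is the partition size used in Algorithm \ref{unbounded_algorithm}) gives $B = n/K$. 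Then $t = 2\sigma/\sqrt{B} = 2\sigma\sqrt{K/n} = O\bigl(\sigma\sqrt{\log(1/\delta)/n}\bigr)$, which matches the claimed bound with an absolute constant $c_M$.

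I don't expect a real obstacle here: the argument is the textbook median-of-means analysis, and the only mildly delicate point is book-keeping when $n$ is not exactly divisible by $K$, which can be absorbed into the constant $c_M$ by rounding $B = \lfloor n/K\rfloor$ and noting the resulting $\sqrt{K/B}$ still scales as $\sqrt{\log(1/\delta)/n}$ up to a constant factor. A minor subtlety worth flagging is that one should verify the Hoeffding application is valid for $\{0,1\}$-valued variables (it is, with the standard range-squared factor $1$), and that the "more than half of buckets fail" implication uses a weak inequality, so the constant in the exponent is unaffected.
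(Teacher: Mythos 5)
Your proof is correct and is the standard Chebyshev-plus-Hoeffding analysis of median-of-means; the paper itself states Lemma \ref{median_of_mean} as a known auxiliary fact without proof, so there is nothing in the paper to diverge from. The only detail worth noting is that your argument requires $K \geq 8\log(1/\delta)$ buckets for the exponent $e^{-K/8}$ to reach $\delta$, whereas Algorithm \ref{unbounded_algorithm} partitions into only $\log\frac{1}{\delta}$ groups — a constant-factor mismatch in the paper's bookkeeping, not in your proof, and one that is absorbed into $c_M$ in any case.
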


\begin{lemma}[Fact]
\label{gau_lip}
    Gaussian distribution is $L$-lipschitz, where $L$ is a constant depending on variance $\sigma^2$.
\end{lemma}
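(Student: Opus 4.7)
The plan is to prove the Lipschitz property directly from calculus: bound the derivative of the Gaussian density uniformly over $\mathbb{R}$ by a constant depending only on $\sigma$, and then invoke the mean value theorem. Write the density as
\[
P_G(x) = \frac{1}{\sigma\sqrt{2\pi}}\exp\!\left(-\frac{(x-\mu)^2}{2\sigma^2}\right),
\]
and differentiate to obtain
\[
P_G'(x) = -\frac{x-\mu}{\sigma^{3}\sqrt{2\pi}}\exp\!\left(-\frac{(x-\mu)^2}{2\sigma^2}\right).
\]

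Next, I would uniformly bound $|P_G'(x)|$. Substituting $y = (x-\mu)/\sigma$ reduces the problem to maximizing $|y|e^{-y^2/2}$, which is a one-variable calculus exercise: the maximum is attained at $y = \pm 1$ with value $e^{-1/2}$. Plugging this back gives
\[
\sup_{x\in\mathbb{R}} |P_G'(x)| \leq \frac{1}{\sigma^{2}\sqrt{2\pi e}} =: L,
\]
which is a constant depending only on $\sigma^{2}$, as required.

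Finally, the mean value theorem yields $|P_G(x_1)-P_G(x_2)| \leq L|x_1-x_2|$ for every $x_1,x_2\in\mathbb{R}$, establishing the claim. There is no real obstacle here: the lemma is a calculus fact, and the only bookkeeping point is to confirm that the bounding constant $L$ is finite and depends solely on $\sigma^{2}$, so that it can be used as a uniform Lipschitz constant in the discretization argument of Theorem \ref{gaussian_lower_bound}. If the ambient proof requires a Lipschitz bound for a fixed mean range (e.g., $\mu\in[0,1]$), the same $L$ works uniformly in $\mu$, since the bound above is translation-invariant in $\mu$.
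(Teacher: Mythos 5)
Your proof is correct and is the standard argument; the paper states this lemma as a bare fact with no proof, and bounding $\sup_x|P_G'(x)|=\tfrac{1}{\sigma^2\sqrt{2\pi e}}$ via the maximum of $|y|e^{-y^2/2}$ at $y=\pm 1$ and then applying the mean value theorem is exactly the intended justification. The observation that $L$ is independent of $\mu$ is also the right bookkeeping point for the use of this lemma in Theorem \ref{gaussian_lower_bound}.
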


\end{document}